\documentclass{article}

\PassOptionsToPackage{numbers, compress}{natbib}
\bibliographystyle{plainnat}

\usepackage[final]{neurips_2025}




\usepackage[utf8]{inputenc} 
\usepackage[T1]{fontenc}    
\usepackage{hyperref}       
\usepackage{url}            
\usepackage{booktabs}       
\usepackage{amsfonts}       
\usepackage{nicefrac}       
\usepackage{amsthm}
\usepackage{mathtools}
\usepackage{mathabx}
\usepackage{subcaption}
\usepackage{setspace}
\usepackage{algorithm}
\usepackage{algpseudocode}
\usepackage{wrapfig}
\usepackage{bbm}
\usepackage{aliascnt}

\usepackage{comment}
\usepackage{enumitem}
\usepackage{multirow}
\usepackage[table]{xcolor}
\usepackage{tablefootnote}

\usepackage{amsmath,amsfonts,bm}









\def\eqref#1{equation~\ref{#1}}









\def\1{\bm{1}}

\def\eps{{\epsilon}}




\def\rva{{\mathbf{a}}}

\def\rvc{{\mathbf{c}}}

\def\rvh{{\mathbf{h}}}

\def\rvp{{\mathbf{p}}}

\def\rvt{{\mathbf{t}}}

\def\rvv{{\mathbf{v}}}
\def\rvw{{\mathbf{w}}}
\def\rvx{{\mathbf{x}}}
\def\rvy{{\mathbf{y}}}
\def\rvz{{\mathbf{z}}}







\DeclareMathAlphabet{\mathsfit}{\encodingdefault}{\sfdefault}{m}{sl}
\SetMathAlphabet{\mathsfit}{bold}{\encodingdefault}{\sfdefault}{bx}{n}











\newcommand{\R}{\mathbb{R}}



\DeclareMathOperator*{\argmax}{arg\,max}



\newaliascnt{prop}{thm}
\newtheorem{prop}[prop]{Proposition}
\aliascntresetthe{prop}

\newaliascnt{lem}{thm}

\aliascntresetthe{lem}

\newaliascnt{cor}{thm}
\newtheorem{cor}[cor]{Corollary}
\aliascntresetthe{cor}

\newaliascnt{rmk}{thm}

\aliascntresetthe{rmk}

\newaliascnt{definition}{thm}
\newtheorem{definition}[definition]{Definition}
\aliascntresetthe{definition}

\newcommand{\calA}{{\mathcal{A}}}

\newcommand{\calD}{{\mathcal{D}}}

\newcommand{\calL}{{\mathcal{L}}}
\newcommand{\calM}{{\mathcal{M}}}
\newcommand{\calN}{{\mathcal{N}}}

\newcommand{\calR}{{\mathcal{R}}}

\newcommand{\norm}[1]{\left\lVert #1 \right\rVert_2}

\newcommand{\vecop}{\operatorname{vec}}
\newcommand{\diag}{\text{diag}}
\captionsetup{font=small}

\usepackage{graphicx}
\newcommand{\mymethod}[1]{\scalebox{1.0}{\texttt{\textbf{#1}}}}

\usepackage[capitalize,nameinlink]{cleveref}
\crefname{equation}{Eq.}{Eqs.}
\crefname{section}{Sec.}{Secs.}
\crefname{figure}{Fig.}{Figs.}
\crefname{table}{Table}{Tables}
\crefname{algorithm}{Alg.}{Algs.}

\crefname{thm}{theorem}{theorems}
\Crefname{thm}{Theorem}{Theorems}
\crefname{prop}{proposition}{propositions}
\Crefname{prop}{Proposition}{Propositions}
\crefname{lem}{lemma}{lemmas}
\Crefname{lem}{Lemma}{Lemmas}
\crefname{cor}{corollary}{corollaries}
\Crefname{cor}{Corollary}{Corollaries}
\crefname{rmk}{remark}{remarks}
\Crefname{rmk}{Remark}{Remarks}
\crefname{definition}{definition}{definitions}
\Crefname{definition}{Definition}{Definitions}

\creflabelformat{equation}{#2\textup{#1}#3}  

\definecolor{Red}{rgb}{0.835, 0.180, 0.180}
\definecolor{Blue}{rgb}{0.227, 0.502, 0.824}
\definecolor{Green}{rgb}{0.345, 0.647, 0.208}
\hypersetup{
    colorlinks=true,
    citecolor=Blue,
    linkcolor=Red,
    urlcolor=Green,
}

\makeatletter
\newcommand{\spacehack}[1]{\relax}
\renewcommand{\appendixautorefname}{\S\@gobble}
\renewcommand{\sectionautorefname}{\S\@gobble}
\renewcommand{\subsectionautorefname}{\S\@gobble}
\renewcommand{\subsubsectionautorefname}{\S\@gobble}

\newcommand{\ie}{\textit{i.e.}}
\newcommand{\eg}{\textit{e.g.}}
\newcommand{\cf}{\textit{cf.}}
\newcommand{\myparagraph}[1]{\vspace{-0.1in}\paragraph{#1}}

\title{\mymethod{FedSVD}: Adaptive Orthogonalization for Private Federated Learning with LoRA}

%

\author{%
Seanie Lee\textsuperscript{1}\thanks{Equal Contribution.}
\quad 
Sangwoo Park\textsuperscript{1*}
\quad
Dong Bok Lee\textsuperscript{1*}
\quad
Dominik Wagner\textsuperscript{2}
\\\bf
Haebin Seong\textsuperscript{1}
\quad
Tobias Bocklet\textsuperscript{2}
\quad
Juho Lee\textsuperscript{1}
\quad
Sung Ju Hwang\textsuperscript{1,3}
\\[1em]
\rm\textsuperscript{1}KAIST
\quad\textsuperscript{2}Technische Hochschule Nürnberg Georg Simon Ohm
\quad\textsuperscript{3}DeepAuto.ai
\\[1em]
\tt
\{lsnfamily02, swgger, markhi\}@kaist.ac.kr\\
}

\begin{document}

\maketitle

\begin{abstract} 
Low-Rank Adaptation (LoRA), which introduces a product of two trainable low-rank matrices into frozen pre-trained weights, is widely used for efficient fine-tuning of language models in federated learning (FL). 
However, when combined with differentially private stochastic gradient descent (DP-SGD), LoRA faces substantial noise amplification: DP-SGD perturbs per-sample gradients, and the matrix multiplication of the LoRA update ($BA$) intensifies this effect. Freezing one matrix (\eg, $A$) reduces the noise but restricts model expressiveness, often resulting in suboptimal adaptation.
To address this, we propose \mymethod{FedSVD}, a simple yet effective method that introduces a global reparameterization based on singular value decomposition (SVD). 
In our approach, each client optimizes only the $B$ matrix and transmits it to the server. 
The server aggregates the $B$ matrices, computes the product $BA$ using the previous $A$, and refactorizes the result via SVD. 
This yields a new adaptive $A$ composed of the orthonormal right singular vectors of $BA$, and an updated $B$ containing the remaining SVD components. 
This reparameterization avoids quadratic noise amplification, while allowing $A$ to better capture the principal directions of the aggregate updates.
Moreover, the orthonormal structure of $A$ bounds the gradient norms of $B$ and preserves more signal under DP-SGD, as confirmed by our theoretical analysis.
As a result, \mymethod{FedSVD} consistently improves stability and performance across a variety of privacy settings and benchmarks, outperforming relevant baselines under both private and non-private regimes. Our code is publicly available at \url{https://github.com/seanie12/fed-svd}.
\end{abstract}
\section{Introduction}

\looseness=-1
Language models have demonstrated remarkable performance across various tasks~\citep{llama, gemma, devlin-etal-2019-bert}. While these models provide strong general capabilities, adapting them to specific domains or tasks typically requires fine-tuning with domain-specific datasets~\cite{bommasani2021opportunities}. In real-world deployments, however, training data is frequently fragmented across various organizations or user devices, and strict privacy regulations often prohibit direct data sharing~\citep{gdpr}. Federated Learning~\citep[FL;][]{fedavg} provides a viable solution by allowing clients to fine-tune models locally on their private data, while a central server aggregates model updates without accessing raw training data, enabling privacy-preserving collaborative training.

In FL, individual clients often lack the computational and memory capacity required for full fine-tuning of large models, making such approaches impractical. 
Parameter-efficient fine-tuning addresses this by freezing most model parameters and updating only a small subset, enabling scalable model adaptation in resource-constrained settings. In particular, Low-Rank Adaptation~\citep[LoRA;][]{lora} has been widely adopted for fine-tuning models in FL environments due to its low local computation and communication requirements~\citep{fed-it, ffa, fed-sa, flora}.

Although FL improves privacy by exchanging model updates instead of raw data, it does not provide formal guarantees against information leakage. 
Sophisticated attacks such as membership inference~\cite{shokri2017membership} or model inversion~\cite{fredrikson2015model}, can reconstruct sensitive information from shared updates, particularly given the capacity of language models to memorize training data~\citep{lm-memorization, lm-memorization-2}. 
Therefore, integrating differential privacy~\citep[DP;][]{dp} is essential to provide formal privacy guarantees and enhance the trustworthiness of collaborative model training. 
A common approach to enforcing DP in deep neural networks is DP-SGD~\citep{dp-sgd1,dp-sgd2,dp-sgd3}, which clips the norm of each per-sample gradient to a predefined threshold and adds Gaussian noise to the average of the clipped gradients.

\looseness=-1
Recent work~\citep{ffa} has shown that na\"{\i}ve integration of LoRA into DP-SGD significantly degrades model performance. 
Following a single DP-SGD update of the LoRA adapter matrices $A$ and $B$, the noise added to both matrices is amplified through their product $BA$, as shown in~\Cref{eq:error_amplification}.
To mitigate this amplification, FFA-LoRA~\citep{ffa} fixes the $A$ matrix to a randomly initialized constant and updates only the $B$ matrix during training.
However, using a fixed random matrix for $A$ limits the learning capability of LoRA, and we observe that optimizing only $B$ leads to significantly slower convergence. Ideally, we would like to adapt $A$ over time to better capture the principal direction of aggregated updates without incurring noise amplification under DP-SGD.

\begin{figure}
    \centering
    \vspace{-0.1in}
    \includegraphics[width=1.0\linewidth]{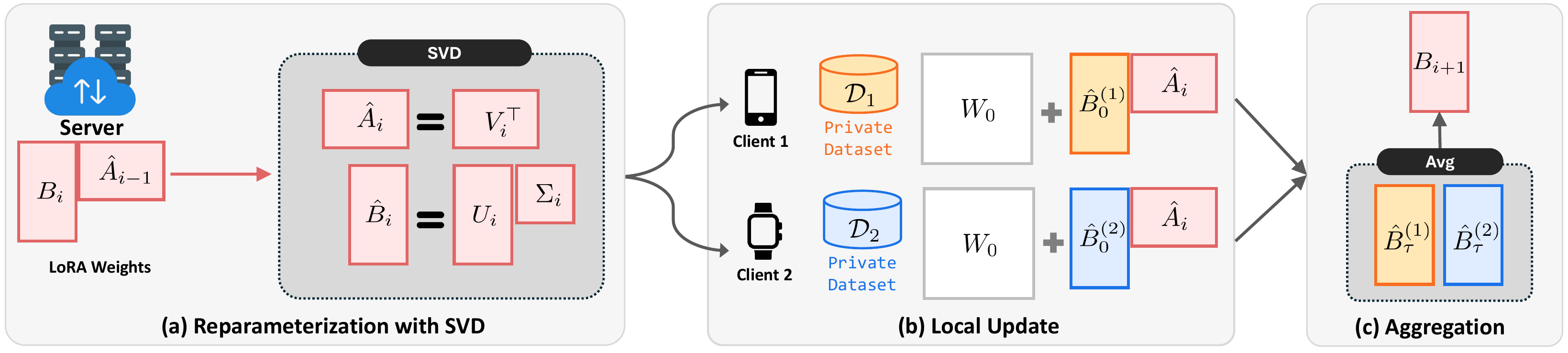}
    \vspace{-0.2in}
    \caption{\textbf{(a)} At communication round $i$, the server computes the SVD of $B_i \hat{A}_{i-1}$, \ie, $U_i \Sigma_i V_i^\top = B_i \hat{A}_{i-1}$, and initializes $\hat{A}_i = V_i^\top$ and $\hat{B}_i = U_i \Sigma_i$. These reparameterized matrices are then broadcast to all clients. \textbf{(b)} Each client updates only the matrix $\hat{B}^{(k)}_0$, initialized with $\hat{B}_i$, using its local dataset, while keeping $\hat{A}_i$ fixed. \textbf{(c)} The locally optimized $\hat{B}^{(k)}_\tau$ matrices are aggregated at the server to update the global model.}
    \label{fig:concept}
    \vspace{-0.25in}
\end{figure}

To this end, we propose \mymethod{FedSVD}, a \emph{simple yet effective} method that introduces global reparameterization based on singular value
decomposition (SVD). In the first communication round, the server randomly initializes $A_0$ and $B_0$ and broadcasts them to the participating clients. Each client then optimizes only the matrix $B$ using its local data, and the server aggregates the updated $B$ matrices. In each subsequent round, the server refactorizes the product of the aggregated $B$ and the previous $A$ using SVD to obtain the matrices for the next iteration. 
As shown in \Cref{fig:concept}\textcolor{Red}{a}, the rows of $A$ are re-initialized with orthonormal right singular vectors (\ie, $V_i^\top$) of $BA$ obtained from the SVD. The re-initialization of $B$ uses the remaining components of the SVD, namely the left singular vectors and singular values (\ie, $U_i\Sigma_i$). The newly initialized matrices $\hat{A}_i$ and $\hat{B}_i$ are then broadcast to all clients. Each client $k$, initializes its local matrix $\hat{B}^{(k)}_0$ with $\hat{B}_i$ and subsequently optimizes it to obtain $\hat{B}^{(k)}_\tau$, while keeping $\hat{A}_i$ fixed (\Cref{fig:concept}\textcolor{Red}{b}). The resulting $\hat{B}^{(k)}_\tau$ matrices are then collected and aggregated on the server (\Cref{fig:concept}\textcolor{Red}{c}).

\looseness=-1
This SVD-based reparameterization offers several advantages. It allows $A$ to adapt based on the aggregated $B$ without amplifying noise, while maintaining the differential privacy guarantee, since SVD is applied only as a post-processing step after local DP-SGD updates. The orthonormality of $A$ beneficially bounds the gradient norms of $B$, preserving stronger update signals under DP-SGD compared to random initialization. Theoretically, we show that the orthonormal rows of $A$ yield a lower Hessian condition number than a random matrix in a two-layer multilayer perceptron (MLP) with ReLU activations, implying a better-conditioned loss landscape that can potentially lead to faster convergence. Empirically, we observe that this property translates into accelerated accuracy improvement for deep models with orthonormal rows of $A$ (\Cref{fig:glue_private}).

We empirically evaluate \mymethod{FedSVD} on several benchmark datasets, including SNLI~\citep{bowman-etal-2015-snli}, MNLI~\citep{mnli}, SST2~\citep{sst2}, QQP~\citep{qqp}, QNLI~\citep{glue}, and HellaSwag~\citep{zellers2019hellaswag}, both in private and non-private settings. In both regimes, \mymethod{FedSVD} consistently outperforms the relevant baselines during most communication rounds and achieves the highest final accuracy. 

We summarize our findings and contributions as follows:
\vspace{-0.05in}
\begin{itemize}[itemsep=1mm,parsep=1pt,topsep=2pt,leftmargin=*]
    \looseness=-1
    \item We propose \mymethod{FedSVD}, a \emph{simple yet effective} method allowing the LoRA matrix $A$ to adapt over time based on aggregated updates of $B$ using SVD, while eliminating noise amplification under DP-SGD.
    
    \item We theoretically show that orthonormal rows of $A$ yield a  better-conditioned Hessian of the training loss with respect to $B$ in a two-layer MLP with ReLU.
    
    \item We empirically demonstrate that our \mymethod{FedSVD} approach achieves higher accuracy and faster convergence than relevant baselines under DP-SGD in several benchmark datasets. 

\end{itemize}

\section{Background}
\label{sec:backgrounds}

This section reviews the necessary background, including federated learning with LoRA, DP-SGD, and FFA-LoRA. A detailed discussion of related work is deferred to \Cref{sec:related_work}.

\myparagraph{Federated learning with LoRA.} Let $p_\theta: \mathcal{X} \to\mathcal{Y}$ be a language model (\eg, \citet{devlin-etal-2019-bert,roberta}) parameterized by $\theta$, which maps an input token sequence $\rvx\in\mathcal{X}$ to an output class label $y\in\mathcal{Y}$. In the FL framework, each client $k\in [K]\coloneqq\{1,\ldots, K\}$ has access only to its local training dataset $\mathcal{D}_k=\{(\rvx^{(k)}_i,  y^{(k)}_i) \}_{i=1}^{n_k}$, where $\mathcal{D}_k \bigcap \mathcal{D}_{k^\prime}=\emptyset$ for all $k,k^\prime \in [K]$ with $k\neq k^\prime$. 
Furthermore, the central server never accesses any local datasets directly. 
At each update round $i\in [R]$, a random subset of client indices $S_i\subset [K]$ is selected such that $\lvert S_i\rvert = K^\prime$. Each selected client $k\in S_i$ then receives a copy of the current global model parameters $\theta_i$ from the central server and trains its local model $p_{\theta_{i}^{(k)}}$ using its private dataset $\calD_k$ as follows:
\begin{equation}
\theta^{(k)}_{i, t+1} = \theta^{(k)}_{i,t} - \eta \nabla_\theta \mathcal{L}(\theta^{(k)}_{i,t}; \calD_{k}),\quad
\calL(\theta^{(k)}_{i,t};\calD_k)= -\frac{1}{n_k}\sum_{(\rvx,y)\in\calD_k} \log p_{\theta_{i,t}^{(k)}}(y\mid \rvx),
\label{eq:obj}
\end{equation}
for $t=0, \ldots, \tau_k-1$, where $\eta>0$ is the learning rate and $\theta^{(k)}_{0,0}$ is initialized with $\theta_i$. Since full fine-tuning of $p_\theta$ is computationally expensive, LoRA is commonly employed to reduce overhead by injecting trainable low-rank matrices into the weight matrix of each layer $l$:
\begin{equation}
    W^{(k,l)}_{i,t} = W^{(l)}_0 + B^{(k,l)}_{i,t}A^{(k,l)}_{i,t},
\end{equation}
where $W^{(l)}_0$ is a frozen pre-trained weight matrix of $p_{\theta_{i}}$, and $A^{(k,l)}_{i,t}\in\mathbb{R}^{r\times d_\text{in}}$ and $B^{(k,l)}_{i,t}\in\mathbb{R}^{d_\text{out}\times r}$ are the corresponding low-rank matrices. 
We denote $\theta^{(k)}_{i,t}=\{(A^{(k,l)}_{i,t}, B^{(k,l)}_{i,t}) \}_{l=1}^L$ as the set of LoRA adapter weights for client $k$ at step $t$ of round $i$, where each pair $(A^{(k,l)}_{i,t}, B^{(k,l)}_{i,t})$ represents the LoRA matrices in layer $l$. 
In methods such as FedAvg~\citep{fedavg} and FedIT~\citep{fed-it}, the server updates its parameters $\theta_i=\{(A^{(l)}_i, B^{(l)}_i)\}_{l=1}^L$ by aggregating the weights from the participating clients as follows:
\begin{align}
    A^{(l)}_{i+1} &= \left(\sum_{k\in S_i}\frac{n_k}{m_i}A^{(k, l)}_{i,\tau_k}\right), \quad B^{(l)}_{i+1} =\left(\sum_{k\in S_i}\frac{n_k}{m_i}B^{(k, l)}_{i,\tau_k}\right),
\label{eq:fedavg}
\end{align}    
where $m_i= \sum_{k\in S_i}n_k$ and $n_k=\lvert \mathcal{D}_k \rvert$. 
At round $i+1$, the central server model $p_{\theta_{i+1}}$ uses the updated weight matrix for each layer $l\in [L]$ as follows:
\begin{equation}
    W^{(l)}_{i+1}=W^{(l)}_0 + B^{(l)}_{i+1}A^{(l)}_{i+1},
\end{equation}
where $W^{(l)}_0$ denotes the frozen pre-trained weights and $ B^{(l)}_{i+1}A^{(l)}_{i+1}$ is the aggregated low-rank update.

\myparagraph{Differential privacy.} 
Language models tend to memorize training data, which can lead to the leakage of private information from local client datasets~\citep{lm-memorization, lm-memorization-2}. Differential privacy~\citep[\textbf{DP};][]{dp} provides a formal privacy guarantee by limiting the influence of any individual data point on the model, thus mitigating such leakage risks. 

\begin{definition}[$(\epsilon, \delta)$-DP]
A randomized algorithm $M$ is $(\epsilon, \delta)$-differentially private if, for all neighboring datasets $\calD, \calD^\prime$ that differ in exactly one entry, and all subsets $E$ of the possible outputs of $M$, we have $\text{Pr}(M(\calD)\in E)\leq e^\epsilon\text{Pr}(M(\calD^\prime)\in E) +\delta$, where  $\epsilon$ is the privacy budget, and $\delta$ bounds the probability that the privacy loss exceeds $\epsilon$, \ie, the probability that the DP guarantee may fail.

\vspace{-0.1in}
\end{definition}

In FL, privacy guarantee depends on whether the central server is trusted. In the centralized DP setting, clients send raw updates without local privacy measures, and DP is applied during global aggregation~\cite{brendan2018learning}. In the local DP setting, which assumes an untrusted server, each client ensures that its update is differentially private before communication \cite{wu2020value, li2021federated, qu2021natural}. Our work adopts this stronger local DP setting: we apply DP at the client level so that any shared updates (\ie, model parameters) are already privatized. By the post-processing invariance property of DP~\citep[][Proposition 2.1]{dp}, the final global model also satisfies DP.

\myparagraph{Fixed LoRA $A$ matrix.} 
A common approach to ensuring the differential privacy of deep neural networks is DP-SGD~\citep{dp-sgd1, dp-sgd2, dp-sgd3}. DP-SGD first clips each per-sample gradient $g(\rvx_i)$ from a sampled mini-batch to have a bounded norm by applying $g(\rvx_i)/\max(1, \norm{g(\rvx_i)}/C)$, where $C$ is a predefined threshold. 
Gaussian noise $\xi \sim \calN(0, \sigma^2 C^2 I)$ is then added to the average of the clipped gradients, and the resulting noisy average is used to update the model parameters. 
However, jointly updating and aggregating both $A$ and $B$, introduces a challenge for fine-tuning models with DP-SGD. 
During client-side fine-tuning, Gaussian noise is  added to the average of the clipped gradients of $A$ and $B$, which becomes amplified through their post-update matrix product after a single DP-SGD step: 
\begin{equation}
     (B^{(k,l)}_{i,t} + \xi^{(k,l)}_B)(A^{(k,l)}_{i,t}+\xi^{(k,l)}_A) = B^{(k,l)}_{i,t}A^{(k,l)}_{i,t} + \xi_B^{(k,l)}A^{(k,l)}_{i,t} + B^{(k,l)}_i\xi^{(k,l)}_A + \xi^{(k,l)}_B \xi^{(k,l)}_A,
\label{eq:error_amplification}
\end{equation}
where $\xi^{(k,l)}_A$ and $\xi^{(k,l)}_B$ represent the Gaussian noise added by DP-SGD. To mitigate the noise amplification caused by the LoRA matrix product, \textbf{FFA-LoRA}~\citep{ffa} fixes $A$ as a randomly initialized matrix and performs aggregation only on $B$:
\begin{equation}
    W^{(l)}_{i+1}=W^{(l)}_0+\left(\sum_{k\in S_i}\frac{n_k}{m_i}B^{(k, l)}_{i,\tau_k}\right)A^{(l)}_\text{fixed}.
\end{equation}
This removes the quadratic noise term in \Cref{eq:error_amplification} (\ie, $\xi^{(k,l)}_B\xi^{(k,l)}_A$), as well as $\xi^{(k,l)}_A$, thus stabilizing model training under DP-SGD. However, using a fixed random matrix $A^{(l)}_\text{fixed}$ can affect LoRA learning capacity, potentially leading to suboptimal performance.

\section{Method}\label{sec:ours}
Although FFA-LoRA mitigates noise amplification by freezing $A$,  this can lead to suboptimal adaptation, as a fixed random projection may not align well with the data distribution or the dynamics of local model updates. 
Ideally, $A$ should adapt over time to better capture the principal directions of aggregated updates, while avoiding noise amplification under DP-SGD.

\looseness=-1
\myparagraph{Periodic re-initialization of $A$ via SVD.}
To this end, we propose \mymethod{FedSVD}, a \textit{simple yet effective} approach that avoids direct optimization of $A$ by periodically resetting it to a new matrix with orthonormal rows, obtained via SVD of the aggregated product $BA$. 
Specifically, before broadcasting the newly aggregated matrix $B_{i}$ to the participating clients, the server computes the SVD of $B_{i}\hat{A}_{i-1}$, where $\hat{A}_{i-1}$ is the matrix from the previous round $i-1$, and initializes $\hat{A}_{i}$ and $\hat{B}_{i}$ as follows:
\begin{equation}
\begin{gathered}
    \hat{B}_{i}\coloneqq U_i[:, :r]\Sigma_i[:r, :r], \quad \hat{A}_{i}\coloneqq V^\top[:r, :], \quad U_i \Sigma_i V_i^\top = B_{i}\hat{A}_{i-1},
\end{gathered}    
\end{equation}
where $\hat{B}_0=\mathbf{0}$, $\hat{A}_0$ is initialized with Kaiming uniform~\citep{he2015delving}, $M[:, :r]$ and $M[:r, :]$ denote the first $r$ columns and rows of the matrix $M$, respectively. Note that we omit the superscript $l$ for brevity. Each client $k$ receives $\hat{A}_i$ and $\hat{B}_i$, and optimizes only $\hat{B}_{i}$, using~\Cref{eq:obj} on its local dataset $\calD_k$. The server then aggregates the optimized $\hat{B}_i$ matrices from all participating clients. We outline our complete method in~\Cref{algo:fedsvd}.

Importantly, this reparameterization does not change the value of $B_{i}\hat{A}_{i-1}$, \ie, $B_i\hat{A}_{i-1}=\hat{B}_{i}\hat{A}_{i}$, since $\text{rank}(B_i\hat{A}_{i-1})\leq r$ follows from the low-rank structure of LoRA. 
Therefore, the rank-$r$ SVD exactly recovers $B_i\hat{A}_{i-1}$. 
As a result, all clients receive a consistent, globally synchronized initialization after SVD, while benefiting from updated, data-informed $\hat{A}$ matrices instead of relying on a fixed random projection.
As shown in~\Cref{sec:exp}, this strategy empirically stabilizes training and accelerates optimization. 

\myparagraph{Bounding the gradient norm.}
Moreover, the orthonormality of $\hat{A}$ ensures that its spectral norm is exactly 1, which leads to a tighter bound on the gradient norm of $B$.  
Denoting the output as $\mathbf{z}=(W_0 +B\hat{A})\rvx$, we compute:
\begin{equation}
    \left\lVert\frac{\partial\ell(\rvz)}{\partial {B}}\right\rVert_F = \left\lVert \frac{\partial \ell(\rvz)}{\partial \rvz} (\hat{A}\rvx)^\top \right\rVert_F = \norm{\frac{\partial \ell(\rvz)}{\partial \rvz}} \cdot \lVert\hat{A}\rvx\rVert_2 \leq   \norm{\frac{\partial \ell(\rvz)}{\partial \rvz}} \cdot \lVert\hat{A}\rVert_2 \cdot \norm{\rvx} = \norm{\frac{\partial \ell(\rvz)}{\partial \rvz}} \cdot \norm{\rvx},
\label{eq:gradnorm}
\end{equation}
where $\ell(\rvz)$ is a loss function with the corresponding label $y$, $\lVert \cdot \rVert_F$ is the Frobenius norm, $\lVert \hat{A}\rVert_2$ is the spectral norm of $\hat{A}$, and $\norm{\rvx}$ is the $l_2$-norm of $\rvx$. 
Under DP-SGD, each per-example gradient is clipped to a fixed norm before noise addition. Thus, any implicit amplification introduced by $\hat{A}$ directly increases the amount of clipping, distorting the original gradient, and weakening the update signal. 
Since $\lVert\hat{A}\rVert_2=1$, the gradients reach the clipping threshold with minimal norm, preserving a more genuine update signal under a given privacy budget. In contrast, random initializations usually yield $\lVert A\rVert_2 >1$, necessitating more aggressive clipping and slowing optimization.

\myparagraph{Privacy guarantee of \mymethod{FedSVD}.}
 Due to the post-processing invariance property of DP~\citep[Proposition 2.1]{dp}, \mymethod{FedSVD} guarantees DP by design, as SVD is applied only after $B$ has already been privatized.

\begin{cor}[Privacy guarantee] By Theorem 1 and the moment accountant from~\citet{dp-sgd3}, \mymethod{FedSVD} with DP-SGD and FedAvg aggregation satisfies $(\epsilon,\delta)$-DP, given a sampling rate $q$, the total number of local updates $T = \tau R$ per client, and a noise multiplier $\sigma \geq c\cdot{q\sqrt{T \log(q/\delta)}}/{\epsilon}$ for some constant $c$.
\end{cor}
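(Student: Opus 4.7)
The plan is to combine three ingredients: the moments accountant analysis of \citet{dp-sgd3} applied to the per-client sequence of DP-SGD updates, the post-processing invariance of differential privacy for the server-side SVD refactorization and FedAvg aggregation, and a short induction over communication rounds. The structure of the argument is dictated by the local-DP threat model already established in \Cref{sec:backgrounds}: it suffices to show that, for each fixed client $k$, every quantity ever broadcast by the server is an $(\epsilon,\delta)$-DP function of $\calD_k$.

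First, I would fix an arbitrary client $k$ with dataset $\calD_k$ and observe that the only step in the entire protocol that reads $\calD_k$ is local DP-SGD on $\hat{B}^{(k)}$; because $\hat{A}$ is held fixed during local training, no gradient of $\hat{A}$ ever touches $\calD_k$. Each DP-SGD step clips per-sample gradients to norm $C$ and adds Gaussian noise $\calN(0,\sigma^2 C^2 I)$ to the average of a Poisson-subsampled mini-batch with rate $q$. Across $R$ rounds with $\tau$ local steps each, client $k$ invokes this subsampled Gaussian mechanism $T = \tau R$ times. Invoking Theorem 1 of \citet{dp-sgd3}, there exists a universal constant $c$ such that any $\sigma \geq c\cdot q\sqrt{T\log(1/\delta)}/\epsilon$ renders the joint release of all intermediate $\hat{B}^{(k)}_{i,\tau}$ an $(\epsilon,\delta)$-DP mechanism with respect to $\calD_k$.

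Next, I would argue that everything the server does is post-processing of these per-client releases. In each round, the server forms $B_i=\sum_{k\in S_i}(n_k/m_i)\hat{B}^{(k)}_{i,\tau_k}$ and then the rank-$r$ SVD of $B_i\hat{A}_{i-1}$ to obtain $\hat{A}_i$ and $\hat{B}_i$, which are broadcast. Both steps are deterministic functions of the received $\hat{B}^{(k)}$ matrices together with the previous broadcast $\hat{A}_{i-1}$. By the post-processing invariance property of DP (Proposition 2.1 of \citet{dp}), neither step worsens the $(\epsilon,\delta)$-DP guarantee held by any client. An induction on the round index $i$ then closes the loop: if every broadcast through round $i-1$ is a post-processing of DP-protected client releases, then the round-$i$ broadcasts are as well, since they are deterministic functions of those earlier broadcasts combined with the current round's privatized $\hat{B}^{(k)}_{i,\tau_k}$.

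The main obstacle is conceptual rather than technical: the apparent feedback loop between rounds, in which $\hat{A}_i$ depends via SVD on both the current noisy $B_i$ and on $\hat{A}_{i-1}$, which itself encodes information about earlier private updates, might at first glance look like it should consume additional privacy budget each time $\hat{A}$ is updated. The induction above dissolves this worry by observing that feedback through $\hat{A}$ is simply further post-processing of already-DP quantities and therefore costs nothing beyond what the moments accountant has already charged. A minor bookkeeping point to verify is that the per-client subsampling rate is exactly $q$ and the total number of noisy gradient releases charged to $\calD_k$ is exactly $T=\tau R$, matching the regime in which Theorem 1 of \citet{dp-sgd3} yields the stated noise multiplier.
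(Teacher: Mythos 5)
Your proof is correct and follows exactly the same route as the paper's (one-sentence) proof: the moments accountant of \citet{dp-sgd3} covers the per-client DP-SGD releases, and post-processing invariance covers the server-side aggregation and SVD refactorization, with your round-by-round induction merely making explicit what the paper leaves implicit. The only cosmetic mismatch is that you quote the noise-multiplier condition with $\log(1/\delta)$ (the standard form of Theorem 1 in \citet{dp-sgd3}) while the corollary statement writes $\log(q/\delta)$; this does not affect the argument.
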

\begin{proof}
\vspace{-0.1in}
    This is a direct application of the post-processing invariance property of DP~\citep[Proposition 2.1]{dp} and Theorem 1 in~\citet{dp-sgd3}.
\vspace{-0.1in}
\end{proof}

\begin{figure}[t]
\vspace{-0.1in}
\centering
\begin{algorithm}[H]
\captionof{algorithm}{\mymethod{FedSVD}}
\label{algo:fedsvd}
\begin{spacing}{0.95}
\begin{algorithmic}[1]
    \State \textbf{Input:} Pre-trained language model $p_\theta$, client datasets $\{\calD_k\}_{k=1}^K$, total optimization rounds $R$, learning rate $\eta$, batch size $b$, rank $r$, the number of participating clients $K^\prime$.
    \For{$i=0,\ldots, R-1$}
    \For{for $l=1,\ldots, L$} \Comment{Broadcast global parameters}
    \If{$i > 0$}
    \State $U_i, \Sigma_i, V_i^\top \leftarrow \text{SVD}(B^{(l)}_i\hat{A}^{(l)}_{i-1}), \hat{B}^{(l)}_i\leftarrow U_i[:,:r]\Sigma_i[:r,:r], \hat{A}^{(l)}_i\leftarrow V_i^\top[:r,:]$
    \Else
    \State $\hat{B}^{(l)}_0\leftarrow \mathbf{0}, \hat{A}^{(l)}_0 \leftarrow \texttt{Kaiming\_Uniform}(-d,d)$
    \EndIf
    \EndFor
    \State Sample a set of clients $S_i\subset \{1,\ldots, K\}$ with $\lvert S_i\rvert=K^\prime$, $m_i \leftarrow 0$.
    \For{each client $k\in S_i$} \Comment{{Done in parallel}}
    \State Initialize the client parameter $\theta^{(k)}_{i,0}= \{(\hat{A}_{i,0}^{(k,l)}, \hat{B}_{i,0}^{(k,l)})\}_{l=1}^L \leftarrow \{(\hat{A}_i^{(l)}, \hat{B}_i^{(l)})\}_{l=1}^L$.
    \State Optimize $\{\hat{B}_{i,0}^{(k,l)}\}_{l=1}^L$ on $\calD_k$ with SGD for $\tau_k$ steps with~\Cref{eq:obj}. 
    \State $n_k \leftarrow \lvert\calD_k\rvert$, $m_i \leftarrow m_i +  n_k$, 
    \EndFor
    \For{$l=1,\ldots, L$} \Comment{{Aggregation of parameters updated by the clients}}
    \State $B^{(l)}_{i+1}\leftarrow \sum_{k\in S_i} \frac{n_k}{m_i}\hat{B}^{(k,l)}_{i, \tau_k}$
    \EndFor
    \EndFor
\end{algorithmic}
\end{spacing}
\end{algorithm}
\vspace{-0.25in}
\end{figure}

\myparagraph{Theoretical analysis.}
We analyze how reparameterizing \(A\) and \(B\) via an SVD affects the optimization dynamics of \(C\)-class classification. Consider a labeled dataset \(\mathcal{D}_k=\{(\rvx_i,\rvy_i)\}_{i=1}^{n_k}\) with one-hot labels \(\rvy_i\in\{0,1\}^C\). Let
\begin{equation}
W_1\in\mathbb{R}^{d_h\times d_x},\quad 
A\in\mathbb{R}^{r\times d_x},\quad 
B\in\mathbb{R}^{d_h\times r},\quad
W_2\in\mathbb{R}^{C\times d_h},\quad
\end{equation} 
be parameters of the classification model. With these parameters, let $\rvh_i = (W_1+BA)\rvx_i \in \mathbb{R}^{d_h}, 
\rvz_i=W_2\,\mathrm{ReLU}(\rvh_i)\in\mathbb{R}^C, \text{and }
\rvp_i=\mathrm{softmax}(\rvz_i)$. We define the  cross-entropy loss (with element-wise logarithm)
$
\mathcal{L}_k(B;A)=\tfrac{1}{n_k}\sum_{i=1}^{n_k} \bigl(-\rvy_i^\top \log \rvp_i\bigr).$
Let \(H_k(B;A)\) be the Hessian of \(\mathcal{L}_k(B;A)\) with respect to \(B\). Set $\mathcal{A}= A \otimes I_{d_h}$ and, for each \(i\), let \(S_i=\mathrm{diag}(\rvp_i)-\rvp_i\rvp_i^\top\succeq 0\) and \(D_i=\mathrm{diag}\bigl(\mathbbm{1}\{\rvh_i>0\}\bigr)\), where $\mathbbm{1}$ denotes elementwise indicator function and $\otimes$ denotes the Kronecker product. Then
\begin{equation}
H_k(B;A)=\mathcal{A}\,\mathcal{M}_k\,\mathcal{A}^\top,\qquad 
 \mathcal{M}_k \;=\; \frac{1}{n_k}\sum_{i=1}^{n_k} 
 \bigl(I_{d_h}\otimes \rvx_i\bigr)\,\bigl(D_i W_2^\top S_i W_2 D_i\bigr)\,\bigl(I_{d_h}\otimes \rvx_i^\top\bigr).
\end{equation}

\begin{prop}\label{thm}
Assume \(A\) has full row rank. Then the condition number of the Hessian satisfies
\begin{equation}
\kappa_2\!\bigl(H_k(B;A)\bigr)\;\le\;
\kappa_2(A)^2\,
\frac{\lambda_{\max}(\mathcal{M}_k)}{\lambda_{\min}\!\bigl(\mathcal{M}_k\!\mid_{\mathcal{R}(\mathcal{A}^\top)}\bigr)},
\end{equation}
where \(\lambda_{\min}(\cdot)\) and \(\lambda_{\max}(\cdot)\) denote the smallest and largest eigenvalues of a symmetric matrix. If the rows of \(A\) are orthonormal (so \(\kappa_2(A)=1\)), the bound tightens to
\begin{equation}
\kappa_2\!\bigl(H_k(B;A)\bigr)\;\le\;
\frac{\lambda_{\max}(\mathcal{M}_k)}{\lambda_{\min}\!\bigl(\mathcal{M}_k\!\mid_{\mathcal{R}(\mathcal{A}^\top)}\bigr)}.
\end{equation}
\end{prop}

The proof is deferred to~\Cref{app:proof}. By reparameterizing using the SVD of \(BA\), we write \(BA=U\Sigma V^\top\) and choose
$\hat{A}=V^\top[:r,:]$
(the top \(r\) rows of \(V^\top\)). Then the rows of \(\hat{A}\) are orthonormal, hence
\(\sigma_{\max}(\hat{A})=\sigma_{\min}(\hat{A})=1\) and \(\kappa_2(\hat{A})=1\). This removes the \(\kappa_2(A)^2\) factor that appears with a fixed random \(A\). In contrast, for a randomly initialized \(A_{\text{fixed}}\) (\eg, Gaussian, or uniform distribution), its condition number satisfies \(\kappa_2(A_\text{fixed})>1\) with high probability. A smaller Hessian condition number generally indicates a better-conditioned optimization landscape, leading to faster and more stable gradient-based updates to \(B\). Thus, our SVD-based reparameterization improves the stability of local client optimization steps by promoting a well-conditioned projection matrix \(A\). To directly compute the actual condition number, we use a \emph{simple logistic regression} and show that enforcing the orthonormal structure of $A$ yields a lower condition number (see \Cref{tab:condition_num} in \Cref{sec:additional_experiments}).


\section{Experiments}\label{sec:exp}

In this section, we empirically validate the effectiveness of \mymethod{FedSVD}.

\subsection{Experimental Setups}\label{sec:experimental_setups}
\paragraph{Datasets.} Following FFA-LoRA~\cite{ffa}, we use five datasets, including four from the GLUE benchmark~\cite{glue}: Stanford Natural Language Inference~\citep[\textbf{SNLI};][]{bowman-etal-2015-snli}, a sentence-pair classification task for textual entailment with three labels (entailment, neutral, contradiction), \ie, NLI task (or recognizing textual entailment); Multi-Genre Natural Language Inference~\citep[\textbf{MNLI};][]{mnli}, the same NLI task, evaluated on both matched (in-domain) and mismatched (cross-domain) test sets; Stanford Sentiment Treebank v2 \citep[\textbf{SST-2};][]{sst2}, a single-sentence sentiment classification task with two labels (positive, negative); Quora Question Pairs~\citep[\textbf{QQP};][]{qqp}, a paraphrase detection task with two labels (duplicate, not duplicate); and Question Natural Language Inference \citep[\textbf{QNLI};][]{glue}, a binary classification task with two labels (entailment, not entailment) that determines whether a context sentence answers a given question. 
We use the validation split for evaluation, as test splits are unavailable for all datasets except SNLI, which is evaluated on its test split. See \Cref{tab:dataset} in \Cref{sec:dataset} for the dataset statistics.

\myparagraph{Baselines.} We compare our method, \mymethod{FedSVD}, against the following baselines:
\begin{enumerate}[itemsep=1mm,parsep=1pt,topsep=0pt,leftmargin=*]
    \item \textbf{FedAvg} \cite{fedavg, fed-it}: Both $A$ and $B$ matrices are fine-tuned locally and averaged independently, as described in~\Cref{eq:fedavg}.

    \item \textbf{FFA-LoRA} \cite{ffa}: The $A$ matrices are initialized with \texttt{Kaiming\_Uniform}$(-d,d)$~\citep{he2015delving} and remain fixed during training. Only the $B$ matrices are fine-tuned and aggregated.
    
    \item \textbf{FLoRA} \cite{flora}: Both $A$ and $B$ matrices are fine-tuned locally and aggregated by stacking the individual matrices from all clients, rather than averaging them independently. The central server computes the product $BA$ from the stacked matrices and adds it to the pre-trained weight matrix $W_0$. After aggregation, randomly re-initialized $A$, $B$ and updated $W_0$ are sent back to the clients.

    \item \textbf{FedEx-LoRA} \cite{fedex}: Both $A$ and $B$ matrices are fine-tuned and aggregated individually as described in~\Cref{eq:fedavg}. The residual, which is defined as the difference between the aggregated $BA$ and the product of the aggregated $B$ and fixed $A$, is added to the frozen pre-trained matrix $W_0$.
\end{enumerate}

\myparagraph{Data distribution.}
Following \citet{alpha}, we sample client data proportions from a Dirichlet distribution, with concentration parameter $\alpha=0.5$ (except in \Cref{fig:alpha_ablation}) for non-i.i.d data.
Unless stated otherwise (\Cref{fig:client_ablation}), we use six clients in total ($K=6$). To better emulate realistic federated settings, only half of the clients are randomly sampled for participation in each communication round ($K'=3$).  See \Cref{tab:data_dist} in \Cref{sec:dataset} for per-label distribution across six clients with $\alpha=0.5$.

\myparagraph{Implementation details.}
Following FFA-LoRA~\citep{ffa}, we use RoBERTa-large~\cite{roberta} as a base model and apply LoRA~\cite{lora} with rank $r=8$ and scaling factor $\alpha=8$ to the query and value projections, using a LoRA dropout rate of $0.05$. All non-LoRA parameters, including the classification head, are frozen. We run $R=100$ communication rounds, with participating clients in each round updating their weights using vanilla SGD for $\tau=10$ local steps. 
Due to the absence of separate validation splits (except for SNLI), we refrain from extensive hyperparameter tuning. 
Instead, we adopt values that work reasonably well for FedAvg: learning rate \(\eta = 0.5\), clipping norm \(C = 2\), and \(\delta = 10^{-5}\). The same hyperparameters are applied to all methods for a \emph{fair comparison}. We consider two privacy budgets, $\epsilon\in\{3,6\}$, where we use the \href{https://opacus.ai/}{Opacus} library~\citep{opacus} to compute the noise multiplier $\sigma$ for a total $T=R\times\tau$ training steps. We use 3 NVIDIA RTX A6000 GPUs for all experiments.

\begin{table}[t]
    \caption{\small Results on 6 GLUE tasks \textbf{without privacy constraints}. We report average accuracy and 95\% confidence intervals over 5 runs. The best/second-best results are highlighted in \textbf{bold}/\underline{underline}, respectively.}
    \label{tab:glue_nonprivate}
    \centering
    \small
    \resizebox{\textwidth}{!}{
    \begin{tabular}{lccccccc}
        \toprule
        \multirow{2}{*}{\textbf{Method}} & \multirow{2}{*}{\textbf{SNLI}} & \multicolumn{2}{c}{\textbf{MNLI}} & \multirow{2}{*}{\textbf{SST-2}} & \multirow{2}{*}{\textbf{QQP}} & \multirow{2}{*}{\textbf{QNLI}} & \multirow{2}{*}{\textbf{Average}} \\
         & & \textbf{Matched} & \textbf{Mismatched} & & & & \\
        \midrule
        
        FedAvg
        & \underline{84.16} \tiny$\pm$\phantom{0}8.02
        & 74.79 \tiny$\pm$14.92
        & 75.09 \tiny$\pm$15.04
        & 85.89 \tiny$\pm$12.12
        & 61.75 \tiny$\pm$10.06
        & 71.40 \tiny$\pm$12.78
        & 75.51 \tiny$\pm$\phantom{0}6.61
        \\

        FFA-LoRA
        & 82.54 \tiny$\pm$\phantom{0}2.13
        & \underline{82.75} \tiny$\pm$\phantom{0}1.72
        & \underline{83.45} \tiny$\pm$\phantom{0}1.84
        & \underline{94.06} \tiny$\pm$\phantom{0}0.18
        & \underline{78.00} \tiny$\pm$\phantom{0}3.08
        & \underline{86.61} \tiny$\pm$\phantom{0}1.22
        & \underline{84.57} \tiny$\pm$\phantom{0}0.99
        \\

        FLoRA
        & 62.17 \tiny$\pm$13.26
        & 50.49 \tiny$\pm$14.93
        & 50.81 \tiny$\pm$15.27
        & 58.99 \tiny$\pm$12.47
        & 57.91 \tiny$\pm$\phantom{0}7.31
        & 62.16 \tiny$\pm$10.41
        & 57.09 \tiny$\pm$\phantom{0}9.26
        \\

        FedEX-LoRA
        & 70.08 \tiny$\pm$11.06
        & 56.85 \tiny$\pm$14.41
        & 57.74 \tiny$\pm$14.81
        & 59.43 \tiny$\pm$12.44
        & 64.86 \tiny$\pm$\phantom{0}2.39
        & 64.90 \tiny$\pm$ 12.84
        & 62.31 \tiny$\pm$\phantom{0}4.06
        \\

        \midrule
        \rowcolor{Blue!15}\mymethod{FedSVD} (ours)
        & \textbf{85.70} \tiny$\pm$\phantom{0}1.23
        & \textbf{83.96} \tiny$\pm$\phantom{0}2.12
        & \textbf{84.32} \tiny$\pm$\phantom{0}2.27
        & \textbf{94.26} \tiny$\pm$\phantom{0}0.51
        & \textbf{79.82} \tiny$\pm$\phantom{0}2.43
        & \textbf{88.98} \tiny$\pm$\phantom{0}1.43
        & \textbf{86.18} \tiny$\pm$\phantom{0}1.44
        \\
        
        \bottomrule
    \end{tabular}
    }
\vspace{-0.05in}
\end{table}

\begin{figure}[t]
    \vspace{-0.07in}
    \centering
    \includegraphics[width=\textwidth, scale=0.8]{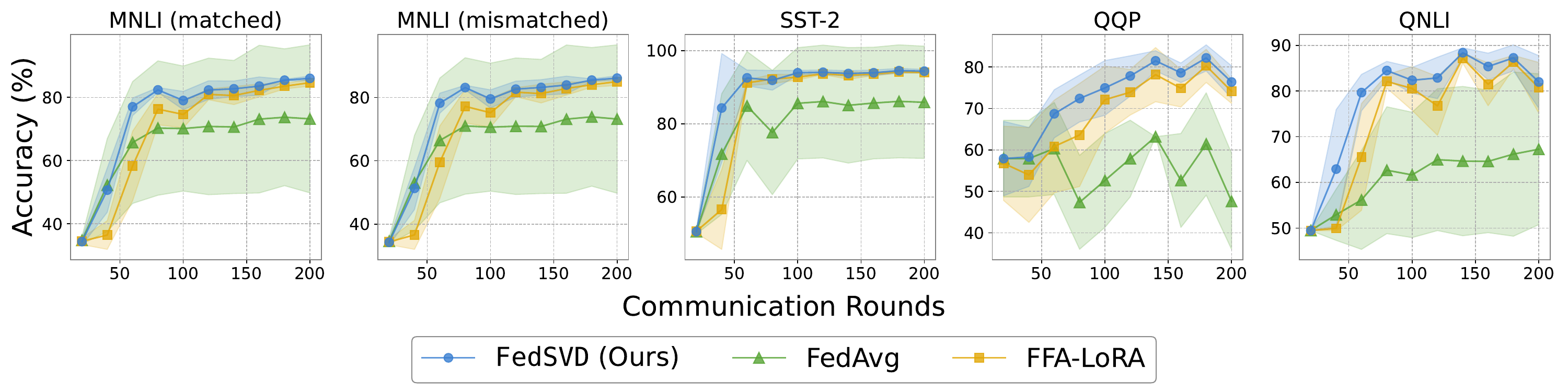}
    \vspace{-0.2in}
    \caption{\small
    Accuracy vs. communication rounds \textbf{without privacy constraints} across 5 GLUE tasks. Curves show average accuracy over 5 runs, with shaded regions indicating 95\% confidence intervals.
    }
    \label{fig:glue_nonprivate}
    \vspace{-0.2in}
\end{figure}

\subsection{Main Results}

\paragraph{Effectiveness of \mymethod{FedSVD} without privacy constraints.}
We first assess \mymethod{FedSVD} on the GLUE benchmark in a non-private setting. 
In \Cref{tab:glue_nonprivate}, FFA-LoRA outperforms FedAvg, which we attribute to the reduced aggregation error. 
In contrast, FLoRA, which transmits a large number of parameters, underperforms due to the frequent random re-initialization of the $A$ and $B$ matrices in our experimental setups. We observe a similar pattern in FedEX-LoRA. 
The proposed \mymethod{FedSVD} further improves the performance by periodically adapting $A$ through SVD of the product $BA$ rather than using a fixed $A$.
As a result, \mymethod{FedSVD} achieves the \emph{highest average accuracy}, outperforming the second-best baseline (FFA-LoRA) by \textbf{+1.29 percentage points} (pp). 
\Cref{fig:glue_nonprivate} illustrates accuracy as a function of communication rounds for FedAvg, FFA-LoRA, and \mymethod{FedSVD}. \mymethod{FedSVD} consistently outperforms the baselines across all rounds. 
This robustness to early stopping makes \mymethod{FedSVD} well-suited for scenarios with limited communication budgets or uncertain convergence points.

\begin{table}[t]
    \caption{\small Results on 6 GLUE tasks \textbf{with DP ($\epsilon \in \{ 3,6 \}, \delta=10^{-5}$)}. We report average accuracy and 95\% confidence intervals over 10 runs. The best/second-best results are highlighted in \textbf{bold}/\underline{underline}, respectively.}
    \label{tab:glue_private}
    \centering
    \small
    \resizebox{\textwidth}{!}{
    \begin{tabular}{c|l|ccccccc}
        \toprule
        {\textbf{DP}} & \multirow{2}{*}{\textbf{Method}} & \multirow{2}{*}{\textbf{SNLI}} & \multicolumn{2}{c}{\textbf{MNLI}} & \multirow{2}{*}{\textbf{SST-2}} & \multirow{2}{*}{\textbf{QQP}} & \multirow{2}{*}{\textbf{QNLI}} & \multirow{2}{*}{\textbf{Average}} \\
        \textbf{Budget} & & & \textbf{Matched} & \textbf{Mismatched} & & & & \\
        \midrule

        \multirow{5}{*}{\scalebox{1.3}{$\epsilon=6$}}
        & FedAvg
        & 61.37 \tiny$\pm$10.26
        & \underline{65.45} \tiny$\pm$\phantom{0}6.14
        & \underline{67.02} \tiny$\pm$\phantom{0}5.93
        & 89.41 \tiny$\pm$2.18
        & 58.59 \tiny$\pm$5.27
        & 60.70 \tiny$\pm$5.27
        & 67.17 \tiny$\pm$2.63
        \\

        & FFA-LoRA
        & \underline{62.55} \tiny$\pm$\phantom{0}9.48
        & 55.56 \tiny$\pm$\phantom{0}8.58
        & 56.39 \tiny$\pm$\phantom{0}8.94
        & \underline{91.42} \tiny$\pm$0.87
        & \underline{64.35} \tiny$\pm$3.26
        & \underline{72.39} \tiny$\pm$4.96
        & \underline{68.02} \tiny$\pm$3.37
        \\

        & FLoRA
        & 39.14 \tiny$\pm$\phantom{0}6.39
        & 48.01 \tiny$\pm$10.76
        & 48.86 \tiny$\pm$11.22
        & \textbf{91.83} \tiny$\pm$1.13
        & 63.18 \tiny$\pm$5.16
        & 49.48 \tiny$\pm$0.03
        & 59.78 \tiny$\pm$4.87
        \\

        & FedEX-LoRA
        & 54.27 \tiny$\pm$10.67
        & 54.98 \tiny$\pm$\phantom{0}8.16
        & 56.02 \tiny$\pm$\phantom{0}8.10
        & 87.34 \tiny$\pm$1.74
        & 53.29 \tiny$\pm$8.46
        & 49.86 \tiny$\pm$0.35
        & 60.86 \tiny$\pm$3.05
        \\

        & \cellcolor{Blue!15} \scalebox{1.1}{\mymethod{FedSVD}} (ours) 
        & \cellcolor{Blue!15}\textbf{72.77} \tiny$\pm$10.22
        & \cellcolor{Blue!15}\textbf{71.68} \tiny$\pm$\phantom{0}3.31
        & \cellcolor{Blue!15}\textbf{73.03} \tiny$\pm$\phantom{0}2.90
        & \cellcolor{Blue!15}91.32 \tiny$\pm$0.85
        & \cellcolor{Blue!15}\textbf{72.42} \tiny$\pm$2.36
        & \cellcolor{Blue!15}\textbf{75.50} \tiny$\pm$4.20
        & \cellcolor{Blue!15}\textbf{76.79} \tiny$\pm$1.81
        \\

        \midrule

        \multirow{5}{*}{\scalebox{1.3}{$\epsilon=3$}}
        & FedAvg
        & 36.70 \tiny$\pm$4.56
        & 49.91 \tiny$\pm$12.16
        & 50.53 \tiny$\pm$12.18
        & 61.87 \tiny$\pm$9.08
        & 55.27 \tiny$\pm$\phantom{0}7.89
        & 50.00 \tiny$\pm$0.35
        & 50.71 \tiny$\pm$4.30
        \\

        & FFA-LoRA
        & \underline{56.96} \tiny$\pm$8.96
        & \underline{57.76} \tiny$\pm$\phantom{0}7.09
        & \underline{59.19} \tiny$\pm$\phantom{0}7.02
        & \textbf{91.08} \tiny$\pm$1.23
        & \underline{68.68} \tiny$\pm$\phantom{0}4.34
        & \underline{62.35} \tiny$\pm$8.07
        & \underline{66.00} \tiny$\pm$3.12
        \\

        & FLoRA
        & 33.42 \tiny$\pm$0.77
        & 41.36 \tiny$\pm$13.96
        & 41.81 \tiny$\pm$14.55
        & 90.46 \tiny$\pm$1.86
        & 57.91 \tiny$\pm$11.12
        & 49.68 \tiny$\pm$0.46
        & 52.44 \tiny$\pm$3.17
        \\

        & FedEX-LoRA
        & 55.62 \tiny$\pm$9.85
        & 40.92 \tiny$\pm$\phantom{0}7.12
        & 41.32 \tiny$\pm$\phantom{0}7.54
        & 74.29 \tiny$\pm$8.09
        & 50.00 \tiny$\pm$\phantom{0}8.61
        & 49.78 \tiny$\pm$0.32
        & 49.28 \tiny$\pm$2.81
        \\

        & \cellcolor{Blue!15}\scalebox{1.1}{\mymethod{FedSVD}} (ours) 
        & \cellcolor{Blue!15}\textbf{70.89} \tiny$\pm$8.52
        & \cellcolor{Blue!15}\textbf{70.65} \tiny$\pm$\phantom{0}3.97
        & \cellcolor{Blue!15}\textbf{72.02} \tiny$\pm$\phantom{0}3.96
        & \cellcolor{Blue!15}\underline{90.46} \tiny$\pm$0.66
        & \cellcolor{Blue!15}\textbf{72.65} \tiny$\pm$\phantom{0}2.60
        & \cellcolor{Blue!15}\textbf{77.10} \tiny$\pm$1.60
        & \cellcolor{Blue!15}\textbf{75.63} \tiny$\pm$1.92
        \\

        \bottomrule
    \end{tabular}
    }
\vspace{-0.05in}
\end{table}






\begin{figure}[t]
    \vspace{-0.1in}
    \centering
    \includegraphics[width=\textwidth, scale=0.8]{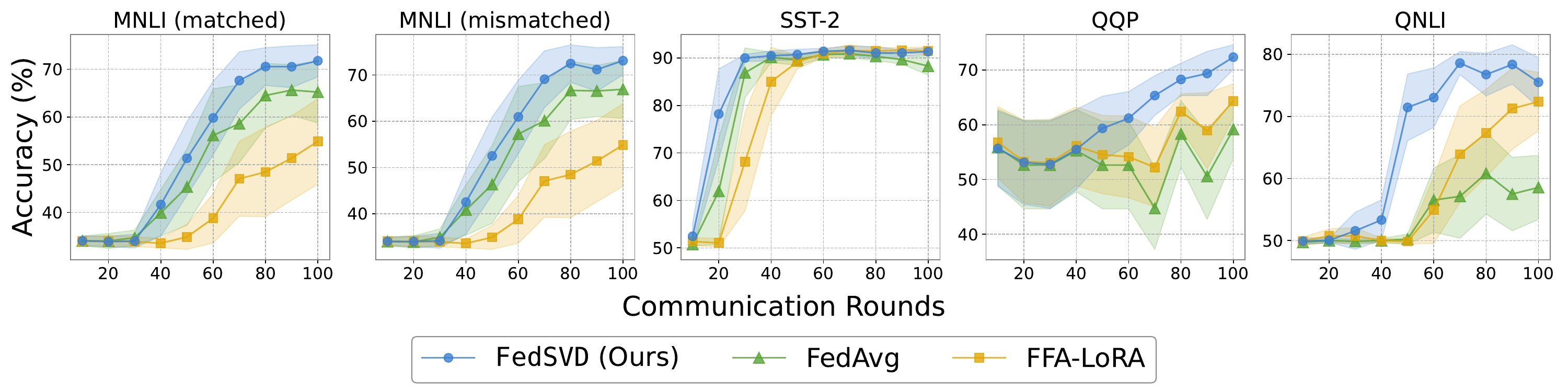}
    \vspace{-0.2in}
    \caption{\small
    Accuracy vs. communication rounds \textbf{with DP} ($\eps=6, \delta=10^{-5}$) across 5 GLUE tasks. Curves show average accuracy over 10 runs, with shaded regions indicating 95\% confidence intervals.
    }
    \label{fig:glue_private}
    \vspace{-0.2in}
\end{figure}

\myparagraph{Effectiveness of \mymethod{FedSVD} with DP-SGD.}
We next evaluate the performance of \mymethod{FedSVD} under DP constraints ($\epsilon \in \{ 3,6 \}, \delta=10^{-5}$). 
\Cref{tab:glue_private} shows that the average gain of \mymethod{FedSVD} over FFA-LoRA increases substantially in the DP settings, \ie, from +1.29 pp without privacy constraints to \textbf{+8.77 pp} with $\epsilon = 6$. Even under a stricter privacy budget ($\epsilon = 3$), where the injected noise intensifies and the signal-to-noise ratio of gradients degrades notably, our method still achieves an accuracy improvement of \textbf{+9.63 pp}, demonstrating its robustness to tighter DP constraints.
We attribute this improvement to the SVD-based re-initialization of \mymethod{FedSVD} which allows $A$ to capture the principal directions of the aggregated updates more reliably. Furthermore, orthonormal rows of $A$ bound the gradient norm of $B$ (\cf\;\Cref{eq:gradnorm}), making gradient clipping more robust under DP-SGD settings.
\Cref{fig:glue_private} demonstrates the effectiveness of SVD re-initialization: \mymethod{FedSVD} consistently exhibits better convergence behavior compared to FFA-LoRA across most training rounds.
Although we observe a slight accuracy drop on SST-2 after round 80, \mymethod{FedSVD} maintains strong overall accuracy, which demonstrates its robustness to DP noise and suitability for real-world federated learning deployments.

\myparagraph{Results on HellaSwag.}
To verify the scalability of \mymethod{FedSVD} to more complex tasks, we compare it with FedAvg and FFA-LoRA using the \textbf{HellaSwag}~\citep{zellers2019hellaswag} dataset. We partition the training split with $\alpha=0.5$ based on the \texttt{activity\_label} field (\ie, labels associated with each caption), since it does not contain explicit labels.
We fine-tune \href{https://huggingface.co/HuggingFaceTB/SmolLM-360M}{SmolLM-360M}~\citep{SmolLM} with LoRA under DP constraints ($\epsilon = 6, \delta = 10^{-5}$). We use the same experimental setups as in \Cref{sec:experimental_setups}. The models are trained with the \emph{next-token prediction} objective only on the correct \texttt{endings}. At test, we select the \texttt{endings} with the highest normalized log-likelihood: 
\vspace{-0.04in}
\begin{align}
\argmax_{\rvx\in\mathcal{X}^{(c)}} \tfrac{1}{|\rvx|}\log p_\theta(\rvx \mid \rvc),
\end{align}
\vspace{-0.29in}
\begin{wraptable}{r}{0.23\textwidth}
\vspace{0.05in}
\centering
\caption{\textbf{Results on the HellaSwag}~\citep{zellers2019hellaswag} dataset.}
\vspace{-0.1in}
\label{tab:hellaswag}
\small
\resizebox{\linewidth}{!}{
\begin{tabular}{lc}
\toprule
\textbf{Method} & \textbf{Accuracy} \\
\midrule
FedAvg & 48.81 {\tiny$\pm$ 0.28} \\
FFA-LoRA & \underline{49.76} {\tiny$\pm$ 0.09} \\
\midrule
\cellcolor{Blue!15}\mymethod{FedSVD} & \cellcolor{Blue!15}\textbf{51.10} {\tiny$\pm$ 0.16} \\
\bottomrule
\end{tabular}
}
\vspace{-0.1in}
\end{wraptable}

where $\rvx$ and $\rvc$ are the token sequences of \texttt{endings} and \texttt{ctx}, and $\mathcal{X}^{(c)}$ is the set of candidate \texttt{endings}. \Cref{tab:hellaswag} presents results averaged over 5 runs, where \mymethod{FedSVD} \emph{outperforms} all baselines (\textbf{+1.34 pp}), demonstrating its effectiveness on a more complex commonsense reasoning task.

\myparagraph{Integration with DoRA.} 
We also show that \mymethod{FedSVD} can be successfully integrated with \textbf{DoRA}~\citep{liu2024dora}; see \Cref{tab:appendix_dora} in \Cref{sec:additional_experiments} for details.

\vspace{-0.02in}
\subsection{Analysis}
\vspace{-0.02in}
\paragraph{Initialization of $A$.} 
To better understand the effect of initialization strategies for matrix $A$, we compare three classes of configurations. First, we randomly initialize $A$ with orthonormal rows and keep it fixed during training (\textbf{$\boldsymbol{A_\text{fixed}}$ w/ random orthonormal}). 
Second, following PiSSA~\citep{pissa}, we factorize the frozen pre-trained matrix using SVD:
$W_0=U_0\Sigma_0 V_0^\top$ and initialize $A$ and $B$ with $\sqrt{\Sigma_0[:r, :r]}V_0^\top[:r,:]$ and  $U_0[:, :r]\sqrt{\Sigma_0[:r, :r]}$, respectively. The base matrix $W_0$ is re-initialized with its residual component $W^\prime_0=U_0[:, r+1:]\Sigma_0[r+1:,r+1:]V_0^\top[r+1:,:]$. Both $W^\prime_0$ and $A$ are frozen, while only $B$ is updated during training (\textbf{$\boldsymbol{A_\text{fixed}}$ w/ PiSSA}). Lastly, we consider an alternative SVD-based initialization where $A$ is periodically re-initialized with $\sqrt{\Sigma[:r,:r]}V^\top[:r, :]$ and $B$ with $U[:,:r]\sqrt{\Sigma[:r, :r]}$ from the SVD of $BA$, which does not preserve the orthonormality of $A$'s rows (\textbf{\mymethod{\textbf{FedSVD}} w/o orthonormal}). 

\begin{wraptable}{t}{0.5\textwidth}
\small
\centering
\captionof{table}{Results on the MNLI dataset with \textbf{different initializations of} $\boldsymbol{A}$ under DP-SGD ($\epsilon=6,\delta=10^{-5}$). $\dagger$ indicates that $A$ matrices are periodically updated.}
\vspace{-0.08in}
\resizebox{0.5\textwidth}{!}{
\begin{tabular}{lcc}
\toprule
\textbf{Method} & \textbf{Matched} & \textbf{Mismatched} \\
\midrule
$A_{\text{fixed}}$ (FFA-LoRA) & 55.56 \tiny $\pm$ 8.58 & 56.39 \tiny $\pm$ 8.94  \\
$A_{\text{fixed}}$ {\scriptsize w/ random orthonormal}  & 55.58 \tiny $\pm$ 5.97  & 56.96 \tiny $\pm$ 5.98 \\
$A_{\text{fixed}}$ {\scriptsize w/ PiSSA} & 66.32 \tiny$\pm$  2.87  & 67.57 \tiny$\pm$ 2.79 \\
\mymethod{FedSVD} {\scriptsize w/o orthonormal}$^\dagger$ & \underline{70.76} \tiny $\pm$ 3.75 & \underline{71.86} \tiny$\pm$ 3.79 \\
\midrule
\rowcolor{Blue!15}\mymethod{FedSVD}$^\dagger$ (Ours) & \textbf{71.68} \tiny $\pm$ 3.31   & \textbf{73.03} \tiny$\pm$ 2.90 \\
\bottomrule
\end{tabular}
}
\label{tab:ablation}
\vspace{-0.1in}
\end{wraptable}
\Cref{tab:ablation} shows that introducing structural priors into matrix $A$, \ie, $A_{\text{fixed}}$ {\scriptsize w/ random orthonormal}, or $A_{\text{fixed}}$ {\scriptsize w/ PiSSA}, helps to stabilize training and yields better performance compared to the unstructured baseline, \ie, $A_{\text{fixed}}$ (FFA-LoRA). However, when $A$ is kept fixed throughout training (methods without $\dagger$), the improvements are limited, suggesting that \textbf{adaptivity plays a crucial role} beyond the structural prior itself. In addition, we find that removing the orthonormal constraint from \mymethod{FedSVD} (denoted as \mymethod{FedSVD} {\scriptsize w/o orthonormal}$^\dagger$) degrades performance, indicating that the orthonormal structure of $A$ is not only beneficial for initialization but remains important throughout training. Although the effectiveness of enforcing the orthonormal constraint appears marginal on the MNLI dataset in \Cref{tab:ablation} (\eg, +0.92/+1.17 pp for Matched/Mismatched), it yields a much larger improvement on the SNLI dataset (\ie, \textbf{+11.68 pp}), as shown in \Cref{tab:appendix_snli_ortho} of \Cref{sec:additional_experiments}.

\begin{figure}[t]
    \centering
    \begin{subfigure}[b]{0.35\textwidth}
        \centering
        \includegraphics[width=\textwidth]{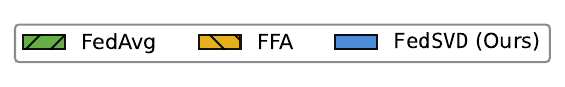}
        \vspace{-0.2in}
    \end{subfigure}
    
    \begin{subfigure}[b]{0.48\textwidth}
        \centering
        \includegraphics[width=\textwidth]{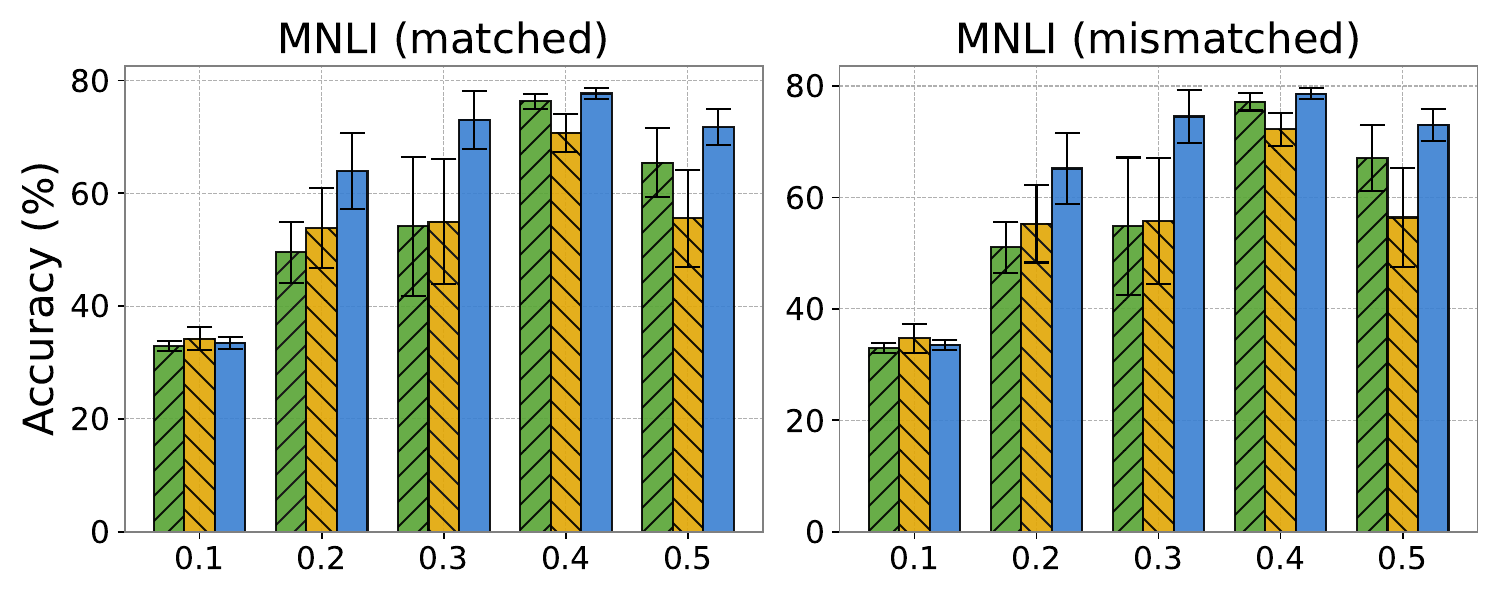}
        \vspace{-0.2in}
        \caption{$\alpha$ for Dirichlet distribution}
        \label{fig:alpha_ablation}
    \end{subfigure}
    \hfill
    \begin{subfigure}[b]{0.48\textwidth}
        \centering
        \includegraphics[width=\textwidth]{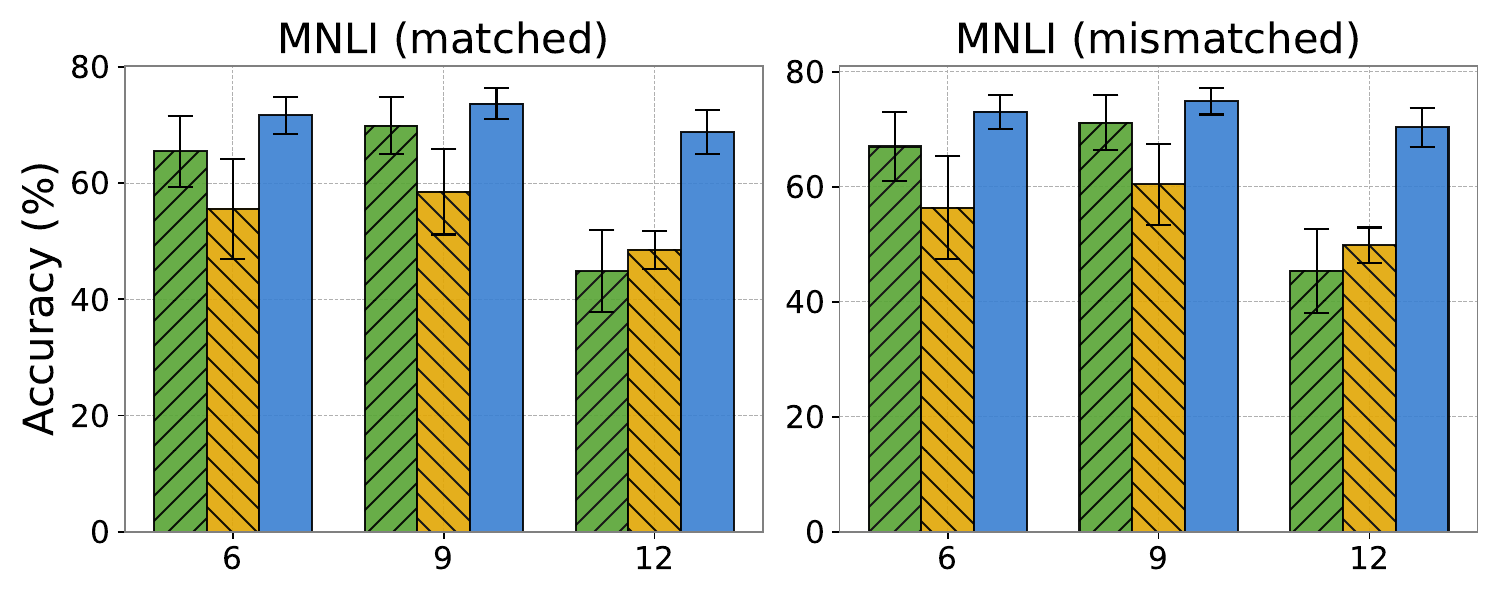}
        \vspace{-0.2in}
        \caption{Total number of clients ($K$)}
        \label{fig:client_ablation}
    \end{subfigure}
    \vspace{-0.05in}
    \caption{
    \textbf{(a):} Results of \textbf{varying} $\boldsymbol{\alpha}\in\{0.1, 0.2, 0.3,0.4,0.5\}$ for a Dirichlet distribution on the MNLI dataset. \textbf{(b):} Results of varying the total \textbf{number of clients} ($\boldsymbol{K}\in\{6,9,12\}$, and $K'=3$) on the MNLI dataset.
    }
    \label{fig:merged_ablation}
    \vspace{-0.15in}
\end{figure}

\myparagraph{Heterogeneity of the data distribution ($\boldsymbol{\alpha}$).}
To assess the robustness of \mymethod{FedSVD} under varying degrees of non-i.i.d. data, we partition the MNLI dataset across clients using various concentration parameters $\alpha \in \{0.1, 0.2, \ldots, 0.5\}$ for the Dirichlet distribution. For each setting, we train models under DP-SGD ($\epsilon=6,\delta=10^{-5}$) and report the mean and standard deviation over 5 independent runs. We compare \mymethod{FedSVD} with FedAvg and FFA-LoRA across all levels of heterogeneity.
As shown in~\Cref{fig:alpha_ablation}, our proposed \mymethod{FedSVD} consistently outperforms the baselines across all tested levels of data heterogeneity, except at $\alpha=0.1$, where extreme heterogeneity causes all methods to fail.

\myparagraph{Varying the number of clients ($\boldsymbol{K}$).}
To evaluate the robustness of each method in more realistic federated settings, we vary the total number of clients $K \in \{6, 9, 12\}$ while keeping the number of participating clients per round fixed at $K^\prime = 3$. Here, $K=12$ is near the \emph{maximum feasible value}, as some clients already have fewer training samples than the data processed per round (\ie, batch size$\times$local steps $\tau$). We compare the performance of FedAvg, FFA-LoRA, and \mymethod{FedSVD} under DP-SGD ($\epsilon=6,\delta=10^{-5}$) on the MNLI dataset and report the mean and standard deviation over 5 independent runs for each configuration.
\Cref{fig:client_ablation} shows that \mymethod{FedSVD} consistently outperforms the baselines across all values of $K$. Notably, the performance degradation with increasing $K$ is significantly smaller for \mymethod{FedSVD}, showing its robustness to the number of clients.



\myparagraph{Overcoming the SVD bottleneck: (1) Low-rank SVD.}

Although \mymethod{FedSVD} introduces additional overhead due to the SVD step, this cost can be substantially mitigated by using randomized low-rank approximation techniques, \eg, the algorithm proposed by~\citet[][\href{https://arxiv.org/pdf/0909.4061}{Algorithm 5.1 on p. 29}]{halko2011finding}. It iteratively approximates the leading singular components with high fidelity while significantly reducing computational complexity, making \mymethod{FedSVD} feasible for practical use in large-scale federated settings.
We set the number of iterations for the low-rank approximation (\texttt{niter}) to 2 or 10. In \Cref{tab:ablation_lowrank_total}, the approximation  (denoted as \mymethod{Low-rank SVD}) achieves \emph{similar accuracy} to \mymethod{Full SVD} (\emph{even better} with \texttt{niter}=2 and 10 on SNLI/MNLI-Matched and QNLI, respectively), while running approximately $\boldsymbol{60\times}$ (\texttt{niter}=2) or $\boldsymbol{10\times}$ (\texttt{niter}=10) \textbf{faster} than \mymethod{Full SVD}. This demonstrates that \mymethod{Low-rank SVD} can serve as an \emph{efficient alternative} to \mymethod{full SVD} without sacrificing accuracy. 

\myparagraph{Overcoming the SVD bottleneck: (2) Frequency of SVD.} To further alleviate the computational burden, we explore reducing the frequency of SVD itself. Specifically, we conduct an ablation study in which SVD-based re-initialization is applied every $1, 2, 5,$ or $10$ communication rounds. 

\begin{wrapfigure}{t}{0.5\textwidth}
    \vspace{-0.05in}
    \centering
    \includegraphics[width=\linewidth]{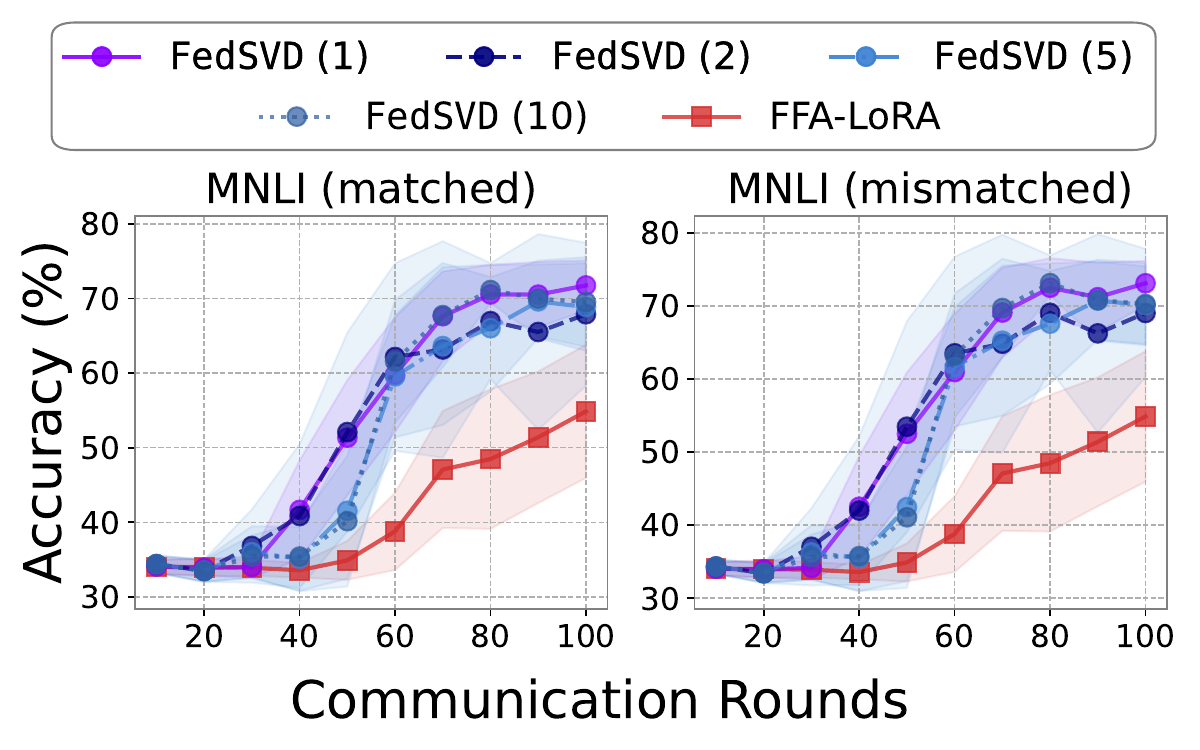}
    \vspace{-0.25in}
    \caption{\small
        Results of \textbf{varying the SVD frequency} using the MNLI dataset under DP-SGD ($\epsilon=6,\delta=10^{-5}$).
    }
    \vspace{-0.2in}
    \label{fig:svd_reinit_period}
\end{wrapfigure}
Each configuration is denoted as \mymethod{FedSVD} ($n$), where $n\in\{1,2,5,10\}$ denotes the re-initialization interval in rounds.
As shown in \Cref{fig:svd_reinit_period}, all \mymethod{FedSVD} ($n$) variants exhibit better convergence than FFA-LoRA, confirming the benefit of SVD re-initialization and its robustness to the choice of interval $n$. Given their comparable performance, variants with less frequent re-initialization offer a \emph{favorable trade-off} when computational efficiency is prioritized. Accuracy remains \emph{stable} across different re-initialization schedules, demonstrating the robustness of \mymethod{FedSVD} to hyperparameter $n$.

\begin{table}[t]
\vspace{-0.1in}
    \caption{Results of \mymethod{FedSVD} using the \textbf{low-rank approximation} (\mymethod{Low-rank SVD}) with \texttt{niter}=2 or 10 under DP-SGD ($\epsilon=6,\delta=10^{-5}$). The average run-time per SVD step (in seconds) for \mymethod{Full SVD} is \textbf{9.12}{\tiny$\pm$ 0.08}, and for \mymethod{Low-rank SVD} it is \textbf{0.15}{\tiny$\pm$ 0.04} (\texttt{niter}=2; $\boldsymbol{60\times}$ \textbf{faster}) or \textbf{0.89}{\tiny$\pm$ 0.07} (\texttt{niter}=10; $\boldsymbol{10\times}$ \textbf{faster}).}
    \label{tab:ablation_lowrank_total}
    \centering
    \small
    \resizebox{\textwidth}{!}{
    \begin{tabular}{lccccccccc}
        \toprule
        \multirow{2}{*}{\textbf{SVD strategy}} & {\textbf{SNLI}} & \multicolumn{2}{c}{\textbf{MNLI} (\texttt{niter}=2)} & {\textbf{SST-2}} & {\textbf{QQP}} & \multicolumn{2}{c}{\textbf{QNLI}} \\
        & (\texttt{niter}=2) & \textbf{Matched} & \textbf{Mismatched} & (\texttt{niter}=2) & (\texttt{niter}=2) & (\texttt{niter}=2) & (\texttt{niter}=10) \\
        \midrule
        \mymethod{Full SVD} & 72.71 {\tiny$\pm$ 7.83} & 71.57 {\tiny$\pm$ 3.18} & \textbf{73.03} {\tiny$\pm$ 2.89} & 91.32 {\tiny$\pm$ 0.53} & 72.42 {\tiny$\pm$ 2.36} & \textbf{75.50} {\tiny$\pm$ 4.20} & 75.50 {\tiny$\pm$ 4.20} \\
        
        \cellcolor{Blue!15}\mymethod{Low-rank SVD} & \cellcolor{Blue!15}\textbf{74.92} {\tiny$\pm$ 5.06} &  \cellcolor{Blue!15}\textbf{72.76} {\tiny$\pm$ 1.82} & \cellcolor{Blue!15}72.74 {\tiny$\pm$ 1.64} & \cellcolor{Blue!15}\textbf{92.34} {\tiny$\pm$ 0.60} & \cellcolor{Blue!15}\textbf{76.66} {\tiny$\pm$ 1.02} & \cellcolor{Blue!15}68.76 {\tiny$\pm$ 7.89} & \cellcolor{Blue!15}\textbf{79.84} {\tiny$\pm$ 2.06} \\
        \bottomrule
    \end{tabular}
    }
    \vspace{-0.15in}
\end{table}

\vspace{-0.02in}
\section{Conclusion}
\vspace{-0.02in}
\label{sec:conclusion}
In this work, we proposed \mymethod{FedSVD}, a \emph{simple yet effective} method for fine-tuning language models with DP-SGD in FL. Instead of using a fixed random matrix $A$ for LoRA, we periodically refactor the product of two LoRA adapter matrices $BA$ with SVD and initialize $A$ with the right singular vectors of $BA$. 
As $A$ remains untrained and SVD is applied post-privatization of $B$, our method preserves differential privacy without incurring additional noise from matrix multiplication. 
Empirically, \mymethod{FedSVD} consistently outperforms the relevant baselines, often achieving faster convergence. 

\begin{wraptable}{t}{0.35\textwidth}
\small
\centering
\vspace{-0.15in}
\captionof{table}{\textbf{Communication cost per round} (\ie, the number of parameters exchanged between the server and clients) when using RoBERTa-large and applying LoRA with rank $r = 8$.}
\vspace{-0.08in}
\resizebox{0.35\textwidth}{!}{
\begin{tabular}{lc}
\toprule
\multirow{2}{*}{\textbf{Method}} & \textbf{Comm. Cost} \\
& \textbf{(\# parameters.)} \\
\midrule
FedAvg & 786,432 \\
FFA-LoRA & 393,216 \\
FLoRA & 52,169,730 \\
FedEX-LoRA & 52,169,730 \\
\midrule
\cellcolor{Blue!15}\mymethod{FedSVD} (ours) & \cellcolor{Blue!15}393,216 \\
\bottomrule
\end{tabular}
}
\label{tab:comm_cost}
\vspace{-0.2in}
\end{wraptable}
\myparagraph{Limitations.} Although our approach shows promising results in both private and non-private federated learning settings, the \emph{computation of SVD} incurs additional overhead on the server side. 
However, since SVD is performed on low-rank matrices, this overhead can be significantly reduced by employing randomized low-rank approximation methods, such as the algorithm proposed by~\citet[][Algorithm 5.1]{halko2011finding}, as shown in \Cref{tab:ablation_lowrank_total}. Another limitation is the additional communication overhead associated with the \emph{broadcast} of the newly initialized $\hat{A}$ matrix to clients after each SVD step. 
However, this cost can be avoided by decentralizing the SVD computation. After aggregating $B_i$, the server computes $\hat{A}_i$ via SVD on the product $B_i \hat{A}_{i-1}$ and transmits only $B_i$ to the clients. Each client then reconstructs $\hat{A}_i$ locally using the same procedure and obtains the updated pair $(\hat{B}_i, \hat{A}_i)$. Since only $\hat{B}_i$ is optimized during training while $\hat{A}_i$ remains fixed, it is not necessary to transmit or aggregate $\hat{A}_i$ at the server. \Cref{tab:comm_cost} compares the communication cost when both the server and the clients perform SVD computations, showing that our proposed \mymethod{FedSVD}, along with FFA-LoRA, achieves the lowest communication cost, since only the LoRA $B$ matrix is transmitted.

\myparagraph{Future work.} Since our method is compatible with any FL setup employing LoRA, extending the empirical evaluation of \mymethod{FedSVD} to a wider range of foundation models across different modalities is a promising direction for future work.
Furthermore, a deeper theoretical analysis of \mymethod{FedSVD}'s convergence dynamics, particularly for complex non-linear models, could provide valuable insights.

\myparagraph{Broader impact.}
\label{sec:broad-impact}
\mymethod{FedSVD} advances data privacy in AI development by enabling stable and effective training of neural networks under differential privacy within a federated learning framework, ensuring that sensitive data remains locally available to each client.
By improving the robustness of privacy-preserving fine-tuning for foundation models, \mymethod{FedSVD} contributes to reducing the risk of information leakage and supports the responsible deployment of AI systems in sensitive domains.

\section*{Acknowledgments}
We express our sincere gratitude to the anonymous reviewers (\textbf{z29n}, \textbf{toPF}, \textbf{yjb1}, and \textbf{w3wU}) for their valuable feedback and efforts in helping us improve this paper.

\myparagraph{Funding.} This work was supported by Institute for Information \& communications Technology Planning \& Evaluation (IITP) grant funded by the Korea government (MSIT) (RS-2019-II190075, Artificial Intelligence Graduate School Program (KAIST), No.RS-2022-II220713, Meta-learning Applicable to Real-world Problems, No. RS-2020II200153, Penetration Security Testing of ML Model Vulnerabilities and Defense), National Research Foundation of Korea (NRF) grant funded by MSIT (No. RS-2023-00256259), a grant of the Korea Machine Learning Ledger Orchestration for Drug Discovery Project (K-MELLODDY), the Ministry of Health \& Welfare and Ministry of Science and ICT, Republic of Korea (grant number: RS-2024-00460870), the Bavarian State Ministry of Science and the Arts (grant number: H.2-F1116.NÜ/61/2), and Samsung Research.
\bibliography{reference}
\appendix
\section*{Appendix}

\section{Proof of \texorpdfstring{\Cref{thm}}{}}
\label{app:proof}
\begin{proof}
Let $\rvx_i\in\R^{d_x}$ with one-hot $\rvy_i\in\R^C$. Parameters:
\begin{equation*}
A\in\R^{r\times d_x},\quad B\in\R^{d_h\times r},\quad
W_1\in\R^{d_h\times d_x},\quad W_2\in\R^{C\times d_h}.
\end{equation*}
Now we assume $A$ has full row rank, \ie, $\text{rank}(A)=r$.
We define activations with forward pass as:
\[
\rvh_i=(W_1+BA)\rvx_i\in\R^{d_h},\qquad
\rva_i=\mathrm{ReLU}(\rvh_i)\in\R^{d_h},\qquad
\rvz_i=W_2 \rva_i\in\R^C.
\]
With elementwise logarithm, let
\[
\rvp_i=\mathrm{softmax}(\rvz_i),\qquad
\ell_i=-\rvy_i^\top\log \rvp_i,\qquad
\mathcal{L}=\frac{1}{n}\sum_{i=1}^n \ell_i.
\]
Standard logit-space derivatives are
\[
\frac{\partial \ell_i}{\partial z_{i,c}}=p_{i,c}-y_{i,c},\qquad
\frac{\partial^2 \ell_i}{\partial z_{i,c}\,\partial z_{i,c'}}=S_{i,cc'},\quad
S_i=\diag(\rvp_i)-\rvp_i \rvp_i^\top\succeq 0.
\]

Let $D_i=\diag(\mathbbm{1}\{\rvh_i>0\})\in\R^{d_h\times d_h}$, so $D_i^2=D_i=D_i^\top$.
On any open region where the sign pattern of $\rvh_i$ is fixed, $D_i$ is constant and
\[
\rva_i=D_i\,(W_1+BA)\,\rvx_i,\qquad
\rvz_i=W_2 D_i (W_1 \rvx_i) + W_2 D_i\,B\,(A \rvx_i).
\]
Set $\rvt_i= (t_{i,1}, \ldots, t_{i,r})\coloneqq A \rvx_i\in\R^r$. Then for each $c\in \{1, \ldots, C\}$,
\[
z_{i,c}=\sum_{a=1}^{d_h} W_{2,ca} D_{i,aa}(W_1 \rvx_i)_a
+\sum_{a=1}^{d_h}\sum_{b=1}^r W_{2,ca} D_{i,aa} B_{ab} t_{i,b}.
\]
Hence, on a fixed mask,
\[
\frac{\partial z_{i,c}}{\partial B_{ab}}=W_{2,ca} D_{i,aa} t_{i,b}
\quad\text{(independent of $B$)},\qquad
\frac{\partial^2 z_{i,c}}{\partial B_{ab}\,\partial B_{pq}}=0.
\]

Now apply the full second-order chain rule for $\ell_i(z_i(B))$:
\[
\frac{\partial^2 \ell_i}{\partial B_{ab}\,\partial B_{pq}}
=\sum_{c=1}^C\sum_{c'=1}^C
\frac{\partial z_{i,c}}{\partial B_{ab}}\,S_{i,cc'}\,
\frac{\partial z_{i,c'}}{\partial B_{pq}}
+\sum_{c=1}^C \frac{\partial \ell_i}{\partial z_{i,c}}\,
\frac{\partial^2 z_{i,c}}{\partial B_{ab}\,\partial B_{pq}}.
\]
The second sum vanishes (affine logits in $B$). Substituting the first derivatives gives
\[
\frac{\partial^2 \ell_i}{\partial B_{ab}\,\partial B_{pq}}
= t_{i,b}\,t_{i,q}\,
\sum_{c,c'} (W_{2,ca} D_{i,aa})\,S_{i,cc'}\,(W_{2,c'p} D_{i,pp})
= (\rvt_i \rvt_i^\top)_{bq}\,[\,D_i W_2^\top S_i W_2 D_i\,]_{ap}.
\]
Therefore the per-sample Hessian (indexed by $(a,b)$ rows/cols) is
\[
H_i=\nabla^2_{\vecop(B)}\ell_i
= (\rvt_i \rvt_i^\top)\ \otimes\ \big(D_i W_2^\top S_i W_2 D_i\big)\succeq 0.
\]
Averaging,
\[
H=\nabla^2_{\vecop(B)}\mathcal{L}
=\frac{1}{n}\sum_{i=1}^n (\rvt_i \rvt_i^\top)\otimes \big(D_i W_2^\top S_i W_2 D_i\big).
\]
Let $\mathcal{A}\coloneqq A\otimes I_{d_h}\in\R^{(rd_h)\times(d_x d_h)}$ and define
\[
\mathcal{M}\coloneqq\frac{1}{n}\sum_{i=1}^n (I_{d_h}\otimes \rvx_i)\,
\big(D_i W_2^\top S_i W_2 D_i\big)\,(I_{d_h}\otimes \rvx_i^\top)\in\R^{(d_h d_x)\times(d_h d_x)}.
\]
Then $H_k(B;A)=\mathcal{A}\mathcal{M}\mathcal{A}^\top$. Now our goal is to bound the following quantity
\begin{equation}
\label{eq:cond-exact}
\kappa_2\!\bigl(H_k(B;A)\bigr)=
\frac{\lambda_{\max}\!\bigl(H_k(B;A)\bigr)}{\lambda_{\min}\!\left(H_k(B;A)\right)} = \frac{\lambda_{\max}(\calA\calM_k\calA^\top)}{\lambda_{\min}(\calA\calM_k\calA^\top)}.
\end{equation}

Using
\begin{equation*}
\lambda_{\max}\!\bigl(H_k(B;A)\bigr)
= \bigl\|\mathcal{A}\mathcal{M}_k\mathcal{A}^\top\bigr\|_2
\le \|\mathcal{A}\|_2^2\,\|\mathcal{M}_k\|_2
= \|\mathcal{A}\|_2^2\,\lambda_{\max}(\mathcal{M}_k)
\end{equation*}
and 
$\|\mathcal{A}\|_2=\|I_C\otimes A\|_2=\|A\|_2=\sigma_{\max}(A)$, we can bound the numerator in \Cref{eq:cond-exact} as follows:
\begin{equation*}
\lambda_{\max}(H_k(B;A)) \leq \sigma_{\max}(A)^2\lambda_{\max}(\calM_k).    
\end{equation*}

Let $\mathcal{R}(\mathcal{A}^\top)\coloneqq \{\mathcal{A}^\top\rvv: \rvv\in\R^{d_hr} \}$ be an image of $\mathcal{A}^\top$. By Rayleigh quotient characterization, 
\begin{equation*}
    \lambda_{\min}(\calM_k|_{\calR(\calA^\top)})=\min_{\rvw\in \calR(\calA^\top), \rvw\neq \mathbf{0}}\frac{\rvw^\top \calM_k\rvw}{\rvw^\top \rvw} = \min_{\rvw\in\calR(\calA^\top), \norm{\rvw}=1} \rvw^\top \calM_k\rvw,
\end{equation*}
for every $\rvw\in\calR(\calA^\top)$ we have 
\begin{equation*}
    \rvw^\top \calM_k \rvw \geq \lambda_{\min}(\calM_k|_{\calR(\calA^\top)})\norm{\rvw}^2.
\end{equation*}
Applying this to $\rvw=\calA^\top \rvv$ with $\rvv\in\R^{d_hr}$ and then minimizing over $\norm{\rvv}=1$ gives
\begin{align*}
    \lambda_{\min}(H_k(B;A))&= \min_{\norm{\rvv}=1} \rvv^\top \calA \calM_k \calA^\top \rvv \\
    &= \min_{\norm{\rvv}=1} \rvw^\top \calM_k \rvw \\
    &\geq  \lambda_{\min}(\calM_k|_{\calR(\calA^\top)}) \min_{\norm{\rvv}=1}\norm{\calA^\top \rvv}^2.
\end{align*}
Using Rayleigh quotient characterization again,
\begin{equation*}
    \min_{\norm{\rvv}=1} \rvv^\top \calA \calA^\top\rvv= \lambda_{\min}(\calA\calA^\top)=\sigma^2_{\min}(\calA)=\sigma^2_{\min}(A).
\end{equation*}
Thus we obtain
\begin{equation}
\label{eq:cond-bound-l2}
\kappa_2\left(H_k(B;A)\right)
\leq \frac{\sigma_{\max}(A)^2\lambda_{\max}(\mathcal{M}_k)}{\sigma_{\min}(A)^2\lambda_{\min}(\calM_k|_{\calR(\calA^\top)})}=\kappa_2(A)^2 \frac{\lambda_{\max}(\mathcal{M}_k)}{\lambda_{\min}(\calM_k|_{\calR(\calA^\top)})}.
\end{equation}
If the rows of $A$ are orthonormal (so $AA^\top=I_r$ and $\sigma_{\max}(A) = \sigma_{\min}(A)=1$), then
\begin{equation}
\label{eq:cond-bound-l2-orth}
\kappa_2\left(H_k(B;A)\right)
\le
\frac{\lambda_{\max}(\mathcal{M}_k)}{ \lambda_{\min}(\calM_k|_{\calR(\calA^\top)})}.
\end{equation}
\end{proof}


\section{Related Work}\label{sec:related_work}

\paragraph{Federated learning.}
Federated Learning (FL) enables decentralized clients to collaboratively train models without sharing raw data. FedAvg~\citep{fedavg} averages locally updated model weights to form a global model, offering a simple yet effective baseline. Built upon FedAvg, recent work has explored integrating Low-Rank Adaptation~\citep[LoRA;][]{lora} into FL to reduce communication and computation overhead during model fine-tuning.
For instance, Fed-IT \cite{fed-it} updates the adapter matrices $A$ and $B$ of LoRA, averages each matrices separately. To aggregate product of $B$ and $A$, several methods have been proposed. FedEx-LoRA \cite{fedex} introduces an additional correction matrix to mitigate aggregation error. FLoRA \cite{flora} stacks adapter matrices and reinitializes them randomly at the end of each communication round. FFA-LoRA \cite{ffa} proposes to use a fixed randomly initialized matrix $A$, while training and aggregating only $B$. Lastly, Fed-SA~\cite{fed-sa} proposes learning both matrices $A$ and $B$, but shares only $A$ during aggregation.
Our method is based on FFA-LoRA; however, we reinitialize the adapter matrices after aggregation to promote gradient stability and learning efficacy. Instead of using a fixed random matrix for $A$, we periodically reinitialize $A$ using orthonormal bases via singular value decomposition (SVD) of $BA$, which empirically accelerates optimization. 

\myparagraph{Differential privacy guaranteed federated fine-tuning.}
$(\epsilon, \delta)$-differential privacy~\citep[DP;][]{dp} provides a rigorous framework ensuring that models trained on neighboring datasets, differing by only one data point, produce similar outputs, thereby preserving individual privacy. DP-SGD \cite{dp-sgd1, dp-sgd2, dp-sgd3} brings this guarantee to deep learning by adding noise to stochastic gradient updates.
In FL, privacy guarantees depend on whether the central server is trusted. In the centralized DP setting, clients send raw updates without local privacy, and DP is applied during global aggregation~\cite{brendan2018learning}. In the local DP setting, which assumes an untrusted server, each client ensures its update is differentially private before communication \cite{wu2020value, li2021federated, qu2021natural}. Our work adopts this stronger setting: we apply DP at the client level, so any shared updates (\ie, model parameters) are already privatized. By the composition property of DP, the final global model also satisfies DP.
DP-SGD is unstable with large numbers of trainable parameters due to increased gradient sensitivity and noise injection \cite{dp-sgd3, yu2021differentially}. To address this, FFA-LoRA \cite{ffa} fixes the adapter matrix $A$ in LoRA to reduce trainable parameters, limiting noise amplification and avoiding quadratic noise growth. 

\myparagraph{Parameter efficient fine-tuning.}
\looseness=-1
To mitigate the computational cost of fine-tuning language models, LoRA \cite{lora} injects trainable low-rank adapter matrices into some of model components. Subsequent works have proposed variants to improve adaptability and efficiency. 
For example, DeltaLoRA \cite{zi2023delta} improves LoRA's expressivity by combining original weights with adapter outputs, thereby enhancing the representational power. LoSparse \cite{li2023losparse} integrates LoRA with sparsity constraints to prevent the pruning of essential neurons. DoRA \cite{liu2024dora} separates the magnitude and direction of the update by learning a scaling factor for the update $\Delta W$, while keeping the direction determined by the LoRA update $B A$.

Unlike these approaches, which aim to learn expressive low-rank approximations of weight updates, PiSSA \cite{pissa} takes a more structural approach. It first decomposes the original weight matrix using SVD, then fine-tunes only the low-rank components corresponding to the top-$r$ singular values, while freezing the residual parts.
Our method differs from PiSSA in two key aspects. First, rather than decomposing the pretrained weights, we perform SVD on the aggregated adapter product $BA$ to reinitialize low-rank components after aggregation of optimized $B$ on the client side. This is distinct from PiSSA's fixed decomposition of model weights. Second, we enforce the rows of $A$ to be orthonormal by initializing them with right singular vectors of $BA$, which empirically stabilizes training and accelerates optimization compared to a non-orthonormal structure. 
AdaLoRA \cite{zhang2023adalora} dynamically learns the optimal rank by parameterizing incremental updates through an SVD to dynamically prune and reallocate a rank budget across layers based on the magnitude of the singular values during training. 
Unlike AdaLoRA, we employ SVD to refactor the aggregated adapter product $BA$ and enforce the rows of $A$ to be orthonormal by initializing them with right singular vectors.

\section{Dataset Statistics}\label{sec:dataset}
In this section, we summarize the statistics of the datasets used in our experiments (\Cref{tab:dataset}), and present the per-label data distribution across clients (\%) for both two-class (SST-2, QQP, QNLI) and three-class (MNLI, SNLI) datasets using a Dirichlet distribution with $\alpha = 0.5$ and six clients in total (\Cref{tab:data_dist}).
\begin{table}[H]
    \caption{An \textbf{overview of datasets} used in our experiments.}
    \label{tab:dataset}
    \centering
    \small
    \begin{tabular}{l|rrrr}
        \toprule
        \textbf{Dataset} & \textbf{\# Classes} & \textbf{\# Train} & \textbf{\# Val} & \textbf{\# Test} \\
        \midrule
        SNLI & 3 & 550,152& 10,000 & 10,000 \\
        {MNLI (matched)} & \multirow{2}{*}{3} & \multirow{2}{*}{392{,}702} & 9,815 & - \\
        {MNLI (mismatched)} & & & 9,832 & - \\
        SST-2 & 2 & 67,349 & 872 & - \\
        QQP & 2 & 363,846 & 40,430 & - \\
        QNLI & 2 & 104,743 & 5,463 & - \\
        HellaSwag & N/A & 39,905 & 10,042 & - \\
        \bottomrule
    \end{tabular}
\end{table}
\begin{table}[H]
    \caption{\textbf{Per-label data distribution} across clients ($\%$) for datasets with two labels (SST-2, QQP, QNLI) and three labels (MNLI, SNLI) under the Dirichlet partition ($\alpha = 0.5$, $6$ clients). For the HellaSwag dataset, which contains 137 distinct \texttt{activity\_labels}, we do not report the detailed per-label distribution here; however, the partitioning strategy can be found at \url{https://github.com/seanie12/fed-svd}.}
    \label{tab:data_dist}
    \centering
    \small
    \begin{tabular}{c|c|cccccc}
        \toprule
        \textbf{\# Labels} & \textbf{Label} & \textbf{Client 0} & \textbf{Client 1} & \textbf{Client 2} & \textbf{Client 3} & \textbf{Client 4} & \textbf{Client 5} \\
        \midrule
        \multirow{2}{*}{\scalebox{1.3}{2}} & 0 & 0.196 & \textbf{0.469} & 0.187 & 0.018 & 0.130 & 0.001 \\
        & 1 & 0.020 & 0.100 & 0.004 & 0.089 & 0.186 & \textbf{0.600} \\
        \midrule
        \multirow{3}{*}{\scalebox{1.3}{3}} & 0 & 0.104 & 0.025 & \textbf{0.673} & 0.024 & 0.100 & 0.073 \\
        & 1 & 0.049 & 0.016 & 0.007 & 0.405 & 0.058 & \textbf{0.465} \\
        & 2 & \textbf{0.333} & 0.168 & 0.113 & 0.000 & 0.064 & 0.322 \\
        \bottomrule
    \end{tabular}
\end{table}

\section{Additional Experiments}\label{sec:additional_experiments}
In this section, we present additional experiments to empirically support the effectiveness of the proposed \texttt{FedSVD}.

\begin{table}[H]
    \caption{\textbf{Comparison on condition numbers}, $\kappa_2(H(B; A)) = \lambda_{\max}(H(B; A)) / \lambda_{\min}(H(B; A))$.}
    \label{tab:condition_num}
    \centering
    \small
    \begin{tabular}{l|cccccc}
        \toprule
        \textbf{Method} & \textbf{0} & \textbf{1k} & \textbf{2k} & \textbf{3k} & \textbf{4k} & \textbf{5k} \\
        \midrule
        FFA-LoRA & 10.18 & 10.15 & 9.78 & 10.09 & 10.13 & 10.23 \\
        \mymethod{FedSVD} & 1.67 & 1.52 & 1.51 & 1.50 & 1.50 & 1.51 \\
        \midrule
        Oracle & 1.06 & 1.01 & 1.02 & 1.04 & 1.03 & 1.02 \\
        \bottomrule
    \end{tabular}
\vspace{-0.1in}
\end{table}
\paragraph{Empirical validation of \Cref{thm}.}
To empirically support \Cref{thm}, we consider a simple logistic regression setup which allows us to directly compute the \textbf{actual condition number} during optimization. Therefore, we directly measure the condition number $\kappa_2(H(B; A)) = \lambda_{\max}(H(B; A))/\lambda_{\min}(H(B; A))$ for FFA-LoRA, \texttt{FedSVD}, and oracle. Specifically, using the SST-2 dataset, we (1) extract features $\mathbf{X} \in \mathbb{R}^{n \times d_{\text{in}}}$ using a pretrained BERT model, (2) train a logistic regression head on top of these frozen features, and (3) compute the Hessian’s condition number directly during optimization. As shown in \Cref{tab:condition_num}, we observe that the condition number of \texttt{FedSVD} remains consistently smaller than that of FFA-LoRA throughout training (up to 5,000 iterations), closely matching the oracle. This confirms the practical advantage of \mymethod{FedSVD} for optimization.

\begin{table}[H]
    \caption{Results on the SNLI \cite{bowman-etal-2015-snli} dataset with \textbf{different initializations of $\boldsymbol{A}$}. We report average accuracy and 95$\%$ confidence intervals over 5 runs.}
    \label{tab:appendix_snli_ortho}
    \centering
    \small
    \begin{tabular}{lc}
        \toprule
        \textbf{Method} & \textbf{Accuracy} \\
        \midrule
        \mymethod{FedSVD} {\scriptsize w/o orthonormal} & 69.48 \tiny $\pm$ 9.45 \\
        \mymethod{FedSVD} & \textbf{81.16} \tiny $\pm$ 2.37 \\
        \bottomrule
    \end{tabular}
\vspace{-0.1in}
\end{table}
\paragraph{Impact of orthonormal initialization on the SNLI dataset.}
In \Cref{tab:ablation}, we present the impact of different initializations of $A$ and the effect of orthonormality. The improvement from enforcing orthonormality (row 4: \texttt{FedSVD} w/o orthonormal vs. row 5: \texttt{FedSVD} (Ours)) appears marginal on the MNLI dataset---\eg, +0.92 percentage points (pp) for Matched and +1.17 pp for Mismatched. However, we find that the influence of orthonormality can vary considerably across datasets. To further investigate this, we conducted an additional experiment on the SNLI dataset. As shown in \Cref{tab:appendix_snli_ortho}, maintaining the orthonormal structure yields a substantial performance gain of \textbf{nearly 12 pp}.

\paragraph{Integration of \texttt{FedSVD} to DoRA~\citep{liu2024dora}.}
To investigate the potential of \texttt{FedSVD} on different parameter-efficient fine-tuning methods, we conduct experiments using DoRA by learning only the $B \in \mathbb{R}^{d_{\text{out}} \times r}$ matrix and freezing the $A \in \mathbb{R}^{r \times d_{\text{in}}}$ matrix, the magnitude vector $\mathbf{m} \in \mathbb{R}^{d_{\text{out}}}$, and the initial weight $W_0 \in \mathbb{R}^{d_{\text{out}} \times d_{\text{in}}}$. For a given input $\mathbf{x} \in \mathbb{R}^{d_{\text{in}}}$, the output of the DoRA layer is defined as 
\begin{equation*}
\diag(\mathbf{m}) \cdot \diag(\|W + BA\|_{2,\text{row}})^{-1} \cdot (W_0 + BA)\mathbf{x},\end{equation*}
where $\|W + BA\|_{2,\text{row}} \in \mathbb{R}^{d_{\text{out}}}$ is a row-wise norm. After computing the SVD of $BA$, we re-initialize the magnitude vector 
\begin{equation*}
\mathbf{m} \leftarrow \diag(\mathbf{m}) \cdot \diag(\|W + BA\|_{2,\text{row}})^{-1} \cdot (W_0 + BA).    
\end{equation*}
The matrices $A$ and $B$ are re-initialized as in \texttt{FedSVD} with LoRA, \ie, $B = U[:, :r]\Sigma[:, :r]$ and $A = V[:, :r]^{\top}$. 

\begin{table}[H]
\vspace{-0.1in}
    \caption{\textbf{Results with DoRA} \cite{liu2024dora} on the MLNI dataset. We report average accuracy and 95$\%$ confidence intervals over 5 runs.}
    \label{tab:appendix_dora}
    \centering
    \small
    \begin{tabular}{lcc}
        \toprule
        \textbf{Method} & \textbf{Matched} & \textbf{Mismatched} \\
        \midrule
        \mymethod{FedSVD} {\scriptsize w/ LoRA} & 71.57 \tiny $\pm$ 3.18 & 73.03 \tiny $\pm$ 2.89 \\
        \mymethod{FedSVD} {\scriptsize w/ DoRA} & \textbf{72.12} \tiny $\pm$ 2.65 & \textbf{73.13} \tiny $\pm$ 2.69 \\
        \bottomrule
    \end{tabular}
\vspace{-0.1in}
\end{table}
In \Cref{tab:appendix_dora}, we observe that \texttt{FedSVD} with DoRA shows a similar performance to \texttt{FedSVD} with LoRA on the MNLI dataset, demonstrating the generalizability of \texttt{FedSVD} across different parameter-efficient fine-tuning parameterizations. We note that DoRA is known to bring benefits primarily in complex tasks (\eg, image generation, text generation), and its improvements on text classification benchmarks are often marginal (\eg, Table 3 in~\cite{pissa}).



\newpage
\section*{NeurIPS Paper Checklist}

The checklist is designed to encourage best practices for responsible machine learning research, addressing issues of reproducibility, transparency, research ethics, and societal impact. Do not remove the checklist: {\bf The papers not including the checklist will be desk rejected.} The checklist should follow the references and follow the (optional) supplemental material.  The checklist does NOT count towards the page
limit. 

Please read the checklist guidelines carefully for information on how to answer these questions. For each question in the checklist:
\begin{itemize}
    \item You should answer \answerYes{}, \answerNo{}, or \answerNA{}.
    \item \answerNA{} means either that the question is Not Applicable for that particular paper or the relevant information is Not Available.
    \item Please provide a short (1–2 sentence) justification right after your answer (even for NA). 
\end{itemize}

{\bf The checklist answers are an integral part of your paper submission.} They are visible to the reviewers, area chairs, senior area chairs, and ethics reviewers. You will be asked to also include it (after eventual revisions) with the final version of your paper, and its final version will be published with the paper.

The reviewers of your paper will be asked to use the checklist as one of the factors in their evaluation. While "\answerYes{}" is generally preferable to "\answerNo{}", it is perfectly acceptable to answer "\answerNo{}" provided a proper justification is given (e.g., "error bars are not reported because it would be too computationally expensive" or "we were unable to find the license for the dataset we used"). In general, answering "\answerNo{}" or "\answerNA{}" is not grounds for rejection. While the questions are phrased in a binary way, we acknowledge that the true answer is often more nuanced, so please just use your best judgment and write a justification to elaborate. All supporting evidence can appear either in the main paper or the supplemental material, provided in appendix. If you answer \answerYes{} to a question, in the justification please point to the section(s) where related material for the question can be found.



\begin{enumerate}

\item {\bf Claims}
    \item[] Question: Do the main claims made in the abstract and introduction accurately reflect the paper's contributions and scope?
    \item[] Answer: \answerYes{} 
    \item[] Justification: The claims made in the abstract and introduction accurately reflect the paper's contributions and scope. We summarize our contribution in the introduction and support all the claims in the experiments.
    \item[] Guidelines:
    \begin{itemize}
        \item The answer NA means that the abstract and introduction do not include the claims made in the paper.
        \item The abstract and/or introduction should clearly state the claims made, including the contributions made in the paper and important assumptions and limitations. A No or NA answer to this question will not be perceived well by the reviewers. 
        \item The claims made should match theoretical and experimental results, and reflect how much the results can be expected to generalize to other settings. 
        \item It is fine to include aspirational goals as motivation as long as it is clear that these goals are not attained by the paper. 
    \end{itemize}

\item {\bf Limitations}
    \item[] Question: Does the paper discuss the limitations of the work performed by the authors?
    \item[] Answer: \answerYes{} 
    \item[] Justification: We have included limitations of our proposed method in conclusion.
    \item[] Guidelines:
    \begin{itemize}
        \item The answer NA means that the paper has no limitation while the answer No means that the paper has limitations, but those are not discussed in the paper. 
        \item The authors are encouraged to create a separate "Limitations" section in their paper.
        \item The paper should point out any strong assumptions and how robust the results are to violations of these assumptions (e.g., independence assumptions, noiseless settings, model well-specification, asymptotic approximations only holding locally). The authors should reflect on how these assumptions might be violated in practice and what the implications would be.
        \item The authors should reflect on the scope of the claims made, e.g., if the approach was only tested on a few datasets or with a few runs. In general, empirical results often depend on implicit assumptions, which should be articulated.
        \item The authors should reflect on the factors that influence the performance of the approach. For example, a facial recognition algorithm may perform poorly when image resolution is low or images are taken in low lighting. Or a speech-to-text system might not be used reliably to provide closed captions for online lectures because it fails to handle technical jargon.
        \item The authors should discuss the computational efficiency of the proposed algorithms and how they scale with dataset size.
        \item If applicable, the authors should discuss possible limitations of their approach to address problems of privacy and fairness.
        \item While the authors might fear that complete honesty about limitations might be used by reviewers as grounds for rejection, a worse outcome might be that reviewers discover limitations that aren't acknowledged in the paper. The authors should use their best judgment and recognize that individual actions in favor of transparency play an important role in developing norms that preserve the integrity of the community. Reviewers will be specifically instructed to not penalize honesty concerning limitations.
    \end{itemize}

\item {\bf Theory assumptions and proofs}
    \item[] Question: For each theoretical result, does the paper provide the full set of assumptions and a complete (and correct) proof?
    \item[] Answer: \answerYes{} 
    \item[] Justification: We have provided a complete proof in the supplemental material.
    \item[] Guidelines:
    \begin{itemize}
        \item The answer NA means that the paper does not include theoretical results. 
        \item All the theorems, formulas, and proofs in the paper should be numbered and cross-referenced.
        \item All assumptions should be clearly stated or referenced in the statement of any theorems.
        \item The proofs can either appear in the main paper or the supplemental material, but if they appear in the supplemental material, the authors are encouraged to provide a short proof sketch to provide intuition. 
        \item Inversely, any informal proof provided in the core of the paper should be complemented by formal proofs provided in appendix or supplemental material.
        \item Theorems and Lemmas that the proof relies upon should be properly referenced. 
    \end{itemize}

    \item {\bf Experimental result reproducibility}
    \item[] Question: Does the paper fully disclose all the information needed to reproduce the main experimental results of the paper to the extent that it affects the main claims and/or conclusions of the paper (regardless of whether the code and data are provided or not)?
    \item[] Answer: \answerYes{} 
    \item[] Justification: We have specified all the implementation details in section 4.1.
    \item[] Guidelines:
    \begin{itemize}
        \item The answer NA means that the paper does not include experiments.
        \item If the paper includes experiments, a No answer to this question will not be perceived well by the reviewers: Making the paper reproducible is important, regardless of whether the code and data are provided or not.
        \item If the contribution is a dataset and/or model, the authors should describe the steps taken to make their results reproducible or verifiable. 
        \item Depending on the contribution, reproducibility can be accomplished in various ways. For example, if the contribution is a novel architecture, describing the architecture fully might suffice, or if the contribution is a specific model and empirical evaluation, it may be necessary to either make it possible for others to replicate the model with the same dataset, or provide access to the model. In general. releasing code and data is often one good way to accomplish this, but reproducibility can also be provided via detailed instructions for how to replicate the results, access to a hosted model (e.g., in the case of a large language model), releasing of a model checkpoint, or other means that are appropriate to the research performed.
        \item While NeurIPS does not require releasing code, the conference does require all submissions to provide some reasonable avenue for reproducibility, which may depend on the nature of the contribution. For example
        \begin{enumerate}
            \item If the contribution is primarily a new algorithm, the paper should make it clear how to reproduce that algorithm.
            \item If the contribution is primarily a new model architecture, the paper should describe the architecture clearly and fully.
            \item If the contribution is a new model (e.g., a large language model), then there should either be a way to access this model for reproducing the results or a way to reproduce the model (e.g., with an open-source dataset or instructions for how to construct the dataset).
            \item We recognize that reproducibility may be tricky in some cases, in which case authors are welcome to describe the particular way they provide for reproducibility. In the case of closed-source models, it may be that access to the model is limited in some way (e.g., to registered users), but it should be possible for other researchers to have some path to reproducing or verifying the results.
        \end{enumerate}
    \end{itemize}

\item {\bf Open access to data and code}
    \item[] Question: Does the paper provide open access to the data and code, with sufficient instructions to faithfully reproduce the main experimental results, as described in supplemental material?
    \item[] Answer: \answerYes{} 
    \item[] Justification: We use public benchmark datasetes for our experiments and include our code in supplementary file.
    \item[] Guidelines:
    \begin{itemize}
        \item The answer NA means that paper does not include experiments requiring code.
        \item Please see the NeurIPS code and data submission guidelines (\url{https://nips.cc/public/guides/CodeSubmissionPolicy}) for more details.
        \item While we encourage the release of code and data, we understand that this might not be possible, so “No” is an acceptable answer. Papers cannot be rejected simply for not including code, unless this is central to the contribution (e.g., for a new open-source benchmark).
        \item The instructions should contain the exact command and environment needed to run to reproduce the results. See the NeurIPS code and data submission guidelines (\url{https://nips.cc/public/guides/CodeSubmissionPolicy}) for more details.
        \item The authors should provide instructions on data access and preparation, including how to access the raw data, preprocessed data, intermediate data, and generated data, etc.
        \item The authors should provide scripts to reproduce all experimental results for the new proposed method and baselines. If only a subset of experiments are reproducible, they should state which ones are omitted from the script and why.
        \item At submission time, to preserve anonymity, the authors should release anonymized versions (if applicable).
        \item Providing as much information as possible in supplemental material (appended to the paper) is recommended, but including URLs to data and code is permitted.
    \end{itemize}

\item {\bf Experimental setting/details}
    \item[] Question: Does the paper specify all the training and test details (e.g., data splits, hyperparameters, how they were chosen, type of optimizer, etc.) necessary to understand the results?
    \item[] Answer: \answerYes{} 
    \item[] Justification: We have specified all the implementation details in section 4.1.
    \item[] Guidelines:
    \begin{itemize}
        \item The answer NA means that the paper does not include experiments.
        \item The experimental setting should be presented in the core of the paper to a level of detail that is necessary to appreciate the results and make sense of them.
        \item The full details can be provided either with the code, in appendix, or as supplemental material.
    \end{itemize}

\item {\bf Experiment statistical significance}
    \item[] Question: Does the paper report error bars suitably and correctly defined or other appropriate information about the statistical significance of the experiments?
    \item[] Answer: \answerYes{} 
    \item[] Justification: We perform experiments multiple times with different random seeds and provide means and confidence intervals.
    \item[] Guidelines:
    \begin{itemize}
        \item The answer NA means that the paper does not include experiments.
        \item The authors should answer "Yes" if the results are accompanied by error bars, confidence intervals, or statistical significance tests, at least for the experiments that support the main claims of the paper.
        \item The factors of variability that the error bars are capturing should be clearly stated (for example, train/test split, initialization, random drawing of some parameter, or overall run with given experimental conditions).
        \item The method for calculating the error bars should be explained (closed form formula, call to a library function, bootstrap, etc.)
        \item The assumptions made should be given (e.g., Normally distributed errors).
        \item It should be clear whether the error bar is the standard deviation or the standard error of the mean.
        \item It is OK to report 1-sigma error bars, but one should state it. The authors should preferably report a 2-sigma error bar than state that they have a 96\% CI, if the hypothesis of Normality of errors is not verified.
        \item For asymmetric distributions, the authors should be careful not to show in tables or figures symmetric error bars that would yield results that are out of range (e.g. negative error rates).
        \item If error bars are reported in tables or plots, The authors should explain in the text how they were calculated and reference the corresponding figures or tables in the text.
    \end{itemize}

\item {\bf Experiments compute resources}
    \item[] Question: For each experiment, does the paper provide sufficient information on the computer resources (type of compute workers, memory, time of execution) needed to reproduce the experiments?
    \item[] Answer: \answerYes{} 
    \item[] Justification: We provide all details in Section 4.1.
    \item[] Guidelines:
    \begin{itemize}
        \item The answer NA means that the paper does not include experiments.
        \item The paper should indicate the type of compute workers CPU or GPU, internal cluster, or cloud provider, including relevant memory and storage.
        \item The paper should provide the amount of compute required for each of the individual experimental runs as well as estimate the total compute. 
        \item The paper should disclose whether the full research project required more compute than the experiments reported in the paper (e.g., preliminary or failed experiments that didn't make it into the paper). 
    \end{itemize}
    
\item {\bf Code of ethics}
    \item[] Question: Does the research conducted in the paper conform, in every respect, with the NeurIPS Code of Ethics \url{https://neurips.cc/public/EthicsGuidelines}?
    \item[] Answer: \answerYes{} 
    \item[] Justification: We have reviewed the NeurIPS COde of Ethics and make sure the reserach conform with it.
    \item[] Guidelines:
    \begin{itemize}
        \item The answer NA means that the authors have not reviewed the NeurIPS Code of Ethics.
        \item If the authors answer No, they should explain the special circumstances that require a deviation from the Code of Ethics.
        \item The authors should make sure to preserve anonymity (e.g., if there is a special consideration due to laws or regulations in their jurisdiction).
    \end{itemize}

\item {\bf Broader impacts}
    \item[] Question: Does the paper discuss both potential positive societal impacts and negative societal impacts of the work performed?
    \item[] Answer: \answerYes{} 
    \item[] Justification: We have included societal impacts in~\Cref{sec:broad-impact}.
    \item[] Guidelines:
    \begin{itemize}
        \item The answer NA means that there is no societal impact of the work performed.
        \item If the authors answer NA or No, they should explain why their work has no societal impact or why the paper does not address societal impact.
        \item Examples of negative societal impacts include potential malicious or unintended uses (e.g., disinformation, generating fake profiles, surveillance), fairness considerations (e.g., deployment of technologies that could make decisions that unfairly impact specific groups), privacy considerations, and security considerations.
        \item The conference expects that many papers will be foundational research and not tied to particular applications, let alone deployments. However, if there is a direct path to any negative applications, the authors should point it out. For example, it is legitimate to point out that an improvement in the quality of generative models could be used to generate deepfakes for disinformation. On the other hand, it is not needed to point out that a generic algorithm for optimizing neural networks could enable people to train models that generate Deepfakes faster.
        \item The authors should consider possible harms that could arise when the technology is being used as intended and functioning correctly, harms that could arise when the technology is being used as intended but gives incorrect results, and harms following from (intentional or unintentional) misuse of the technology.
        \item If there are negative societal impacts, the authors could also discuss possible mitigation strategies (e.g., gated release of models, providing defenses in addition to attacks, mechanisms for monitoring misuse, mechanisms to monitor how a system learns from feedback over time, improving the efficiency and accessibility of ML).
    \end{itemize}
    
\item {\bf Safeguards}
    \item[] Question: Does the paper describe safeguards that have been put in place for responsible release of data or models that have a high risk for misuse (e.g., pretrained language models, image generators, or scraped datasets)?
    \item[] Answer: \answerNA{} 
    \item[] Justification: No, we do not describe any safeguards in our paper.
    \item[] Guidelines:
    \begin{itemize}
        \item The answer NA means that the paper poses no such risks.
        \item Released models that have a high risk for misuse or dual-use should be released with necessary safeguards to allow for controlled use of the model, for example by requiring that users adhere to usage guidelines or restrictions to access the model or implementing safety filters. 
        \item Datasets that have been scraped from the Internet could pose safety risks. The authors should describe how they avoided releasing unsafe images.
        \item We recognize that providing effective safeguards is challenging, and many papers do not require this, but we encourage authors to take this into account and make a best faith effort.
    \end{itemize}

\item {\bf Licenses for existing assets}
    \item[] Question: Are the creators or original owners of assets (e.g., code, data, models), used in the paper, properly credited and are the license and terms of use explicitly mentioned and properly respected?
    \item[] Answer: \answerYes{} 
    \item[] Justification: Yes, we properly use public benchmark datasets.
    \item[] Guidelines:
    \begin{itemize}
        \item The answer NA means that the paper does not use existing assets.
        \item The authors should cite the original paper that produced the code package or dataset.
        \item The authors should state which version of the asset is used and, if possible, include a URL.
        \item The name of the license (e.g., CC-BY 4.0) should be included for each asset.
        \item For scraped data from a particular source (e.g., website), the copyright and terms of service of that source should be provided.
        \item If assets are released, the license, copyright information, and terms of use in the package should be provided. For popular datasets, \url{paperswithcode.com/datasets} has curated licenses for some datasets. Their licensing guide can help determine the license of a dataset.
        \item For existing datasets that are re-packaged, both the original license and the license of the derived asset (if it has changed) should be provided.
        \item If this information is not available online, the authors are encouraged to reach out to the asset's creators.
    \end{itemize}

\item {\bf New assets}
    \item[] Question: Are new assets introduced in the paper well documented and is the documentation provided alongside the assets?
    \item[] Answer: \answerNA{} 
    \item[] Justification: We do not introduce any new assets.
    \item[] Guidelines:
    \begin{itemize}
        \item The answer NA means that the paper does not release new assets.
        \item Researchers should communicate the details of the dataset/code/model as part of their submissions via structured templates. This includes details about training, license, limitations, etc. 
        \item The paper should discuss whether and how consent was obtained from people whose asset is used.
        \item At submission time, remember to anonymize your assets (if applicable). You can either create an anonymized URL or include an anonymized zip file.
    \end{itemize}

\item {\bf Crowdsourcing and research with human subjects}
    \item[] Question: For crowdsourcing experiments and research with human subjects, does the paper include the full text of instructions given to participants and screenshots, if applicable, as well as details about compensation (if any)? 
    \item[] Answer: \answerNo{} 
    \item[] Justification: We do not perform crowdsourcing experiments or research with human subjects.
    \item[] Guidelines:
    \begin{itemize}
        \item The answer NA means that the paper does not involve crowdsourcing nor research with human subjects.
        \item Including this information in the supplemental material is fine, but if the main contribution of the paper involves human subjects, then as much detail as possible should be included in the main paper. 
        \item According to the NeurIPS Code of Ethics, workers involved in data collection, curation, or other labor should be paid at least the minimum wage in the country of the data collector. 
    \end{itemize}

\item {\bf Institutional review board (IRB) approvals or equivalent for research with human subjects}
    \item[] Question: Does the paper describe potential risks incurred by study participants, whether such risks were disclosed to the subjects, and whether Institutional Review Board (IRB) approvals (or an equivalent approval/review based on the requirements of your country or institution) were obtained?
    \item[] Answer: \answerNA{} 
    \item[] Justification: We do not perform research with human subjects.
    \item[] Guidelines:
    \begin{itemize}
        \item The answer NA means that the paper does not involve crowdsourcing nor research with human subjects.
        \item Depending on the country in which research is conducted, IRB approval (or equivalent) may be required for any human subjects research. If you obtained IRB approval, you should clearly state this in the paper. 
        \item We recognize that the procedures for this may vary significantly between institutions and locations, and we expect authors to adhere to the NeurIPS Code of Ethics and the guidelines for their institution. 
        \item For initial submissions, do not include any information that would break anonymity (if applicable), such as the institution conducting the review.
    \end{itemize}

\item {\bf Declaration of LLM usage}
    \item[] Question: Does the paper describe the usage of LLMs if it is an important, original, or non-standard component of the core methods in this research? Note that if the LLM is used only for writing, editing, or formatting purposes and does not impact the core methodology, scientific rigorousness, or originality of the research, declaration is not required.
    \item[] Answer: \answerNA{} 
    \item[] Justification: We only use LLMs for writing and editing.
    \item[] Guidelines:
    \begin{itemize}
        \item The answer NA means that the core method development in this research does not involve LLMs as any important, original, or non-standard components.
        \item Please refer to our LLM policy (\url{https://neurips.cc/Conferences/2025/LLM}) for what should or should not be described.
    \end{itemize}

\end{enumerate}

\end{document}